\def\eqref#1{equation~\ref{#1}}
\def\1{\bm{1}}
\DeclareMathAlphabet{\mathsfit}{\encodingdefault}{\sfdefault}{m}{sl}
\SetMathAlphabet{\mathsfit}{bold}{\encodingdefault}{\sfdefault}{bx}{n}
\newcommand{\E}{\mathbb{E}}
\DeclareMathOperator*{\argmin}{arg\,min}
\newtheorem{theorem}{Theorem}[section]
\newtheorem{corollary}{Corollary}[theorem]
\newcommand{\SYSNAME}{\textsc{RoboShot}}
\newcommand{\comnivore}{\textsc{RoboShot}} 
\DeclareMathOperator*{\argminA}{arg\,min} 
\newcommand{\real}{\mathbb{R}}
\title{Zero-Shot Robustification of Zero-Shot Models}
\author{Dyah Adila$^*$, Changho Shin$^*$, Linrong Cai, Frederic Sala\\
Department of Computer Science \\
University of Wisconsin-Madison \\
\texttt{\{adila,cshin23,lcai54,fredsala\}@wisc.edu} \\
}
\begin{document}
\def\thefootnote{*}\footnotetext{These authors contributed equally to this work}\def\thefootnote{\arabic{footnote}}

\maketitle

\begin{abstract}
Zero-shot inference is a powerful paradigm that enables the use of large pretrained models for downstream classification tasks without further training. However, these models are vulnerable to inherited biases that can impact their performance. The traditional solution is fine-tuning, but this undermines the key advantage of pretrained models, which is their ability to be used out-of-the-box. We propose $\SYSNAME$, a method that improves the robustness of pretrained model embeddings in a fully zero-shot fashion. First, we use language models (LMs) to obtain useful insights from task descriptions. These insights are embedded and used to remove harmful and boost useful components in embeddings---without any supervision. Theoretically, we provide a simple and tractable model for biases in zero-shot embeddings and give a result characterizing under what conditions our approach can boost performance. Empirically, we evaluate $\SYSNAME$ on nine image and NLP classification tasks and show an average improvement of 15.98\% on worst group accuracy, with trivial decrease in overall accuracy over several zero-shot baselines. Additionally, we demonstrate that $\SYSNAME$ is compatible with a variety of pretrained and language models and propose a way to further boost performance with a zero-shot adaptation variant.\footnote{Code can be found in \href{https://github.com/SprocketLab/roboshot}{https://github.com/SprocketLab/roboshot}}
\end{abstract}

\label{sec:intro}
\section{Introduction}
Zero-shot prediction is among the most exciting paradigms in machine learning.
Zero-shot models obviate the need for data collection and  training loops by simply asking for a prediction on any set of classes. 
Unfortunately, such models inherit biases or undesirable correlations from their large-scale training data \citep{bias_1, bias_2}.
In a now-canonical example \citep{koh2021wilds}, they often associate \texttt{waterbirds} with \texttt{water background}.
This behavior leads to decreased performance, often exacerbated on rare data slices that break in-distribution correlations.

A growing body of literature \citep{spurious_finetune_1, finetune_2, zhang2022contrastive} seeks to improve robustness in zero-shot models. 
While promising, these works require labeled data to train or fine-tune models, and so \textbf{do not tackle the zero-shot setting.}
A parallel line of research seeking to debias word embeddings \citep{debias_1, debias_2, debias_3, debias_4} often sidesteps the need for labeled data.
Unfortunately, these works often require domain expertise and painstaking manual specification in order to identify particular concepts that embeddings must be invariant to.
As a result, out-of-the-box word embedding debiasing methods also cannot be applied to zero-shot robustification.

%

Can we robustify zero-shot models without (i) labeled data, (ii) training or fine-tuning, or (iii) manual identification? 
Surprisingly, despite this seemingly impoverished setting, it is often possible to do so. 
Our key observation is that language models \textbf{contain actionable insights} that can be exploited to improve themselves or other models.
These insights are noisy but cheaply available at scale and can be easily translated into means of refinement for zero-shot representations.
These refinements improve performance, particularly on underperforming slices, at nearly no cost.

We propose $\SYSNAME$, a system that robustifies zero-shot models via language model-based insights \emph{without labels, training, or manual specification}.
Using just the task description, $\SYSNAME$ obtains \emph{positive and negative insights} from a language model (potentially the model to be improved itself). 
It uses embeddings of these noisy insights to recover \emph{harmful, beneficial}, and \emph{benign} subspaces of zero-shot latent representation spaces. 
Representations are then modified to neutralize and emphasize their harmful and beneficial components, respectively.
%


Theoretically, we introduce a simple and tractable model to capture and quantify failures in zero-shot models.
We provide a result that characterizes the \emph{quantity and quality} of insights that must be obtained as a function of the severity of harmful correlations. 
%
%
Empirically, $\SYSNAME$ achieves 15.98\% improvement across nine image and NLP datasets while offering sufficient versatility to apply to a diverse variety of base models.
Most excitingly, in certain cases, it reaches comparable or greater improvements \textbf{even when compared to fine-tuned models} that rely on labeled data. In summary, our contributions include:
\begin{enumerate}
    \item A simple theoretical model describing zero-shot failures along with a theoretical analysis of our approach that characterizes the amount of information required for obtaining improvements as a function of the most harmful unwanted correlation,
    \item $\SYSNAME$, an algorithm that implements our core idea. It extracts insights from foundation models and uses them to improve zero-shot representations,
   \item Extensive experimental evidence on zero-shot language and multimodal models, showing improved worst-group accuracy of 15.98\% across nine image and NLP datasets, 
   \item A technique to add further robustness by training an adapter \emph{without any labels} requiring only minimal amounts of validation data.
\end{enumerate}
\begin{figure*}[t]
\includegraphics[width=.9\textwidth]{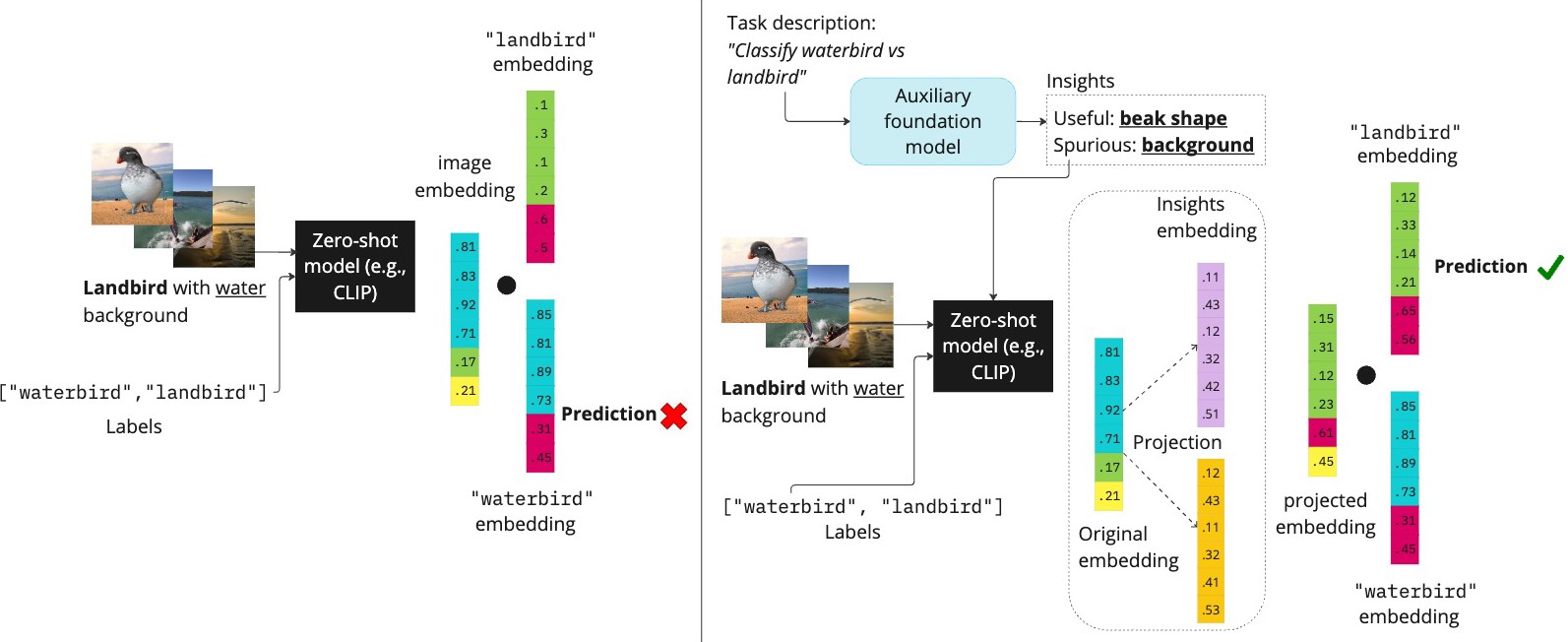}
    \centering
    \caption{
    Left: vanilla zero-shot classification. Right: $\SYSNAME$ projects original embeddings to a space with \textit{reduced spurious components} and \textit{increased useful components}}.
    \label{fig:main}
    \vspace{-2em}
  \end{figure*}

\section{Related Work}
We describe related work in zero-shot model robustness and debiasing embeddings. We provide a more exhaustive list of related work in Appendix \ref{sec:extended_related_work}, including papers studying guiding multi-modal models using language and using LMs as prior information.

\textbf{Zero-shot inference robustness.} Improving model robustness to unwanted correlations is a heavily studied area \citep{gdro, irm, rex,kirichenko2022last,pmlr-v139-liu21f,lee2022surgical}. Some methods require training from scratch and are less practical when applied to large pretrained architectures. Existing approaches to improve robustness \textit{post-pretraining} predominantly focus on fine-tuning. \citep{spurious_finetune_1} detects spurious attribute descriptions and fine-tunes using these descriptions. A specialized contrastive loss is used to fine-tune a pretrained architecture in \citep{finetune_2} and to train an adapter on the frozen embeddings in \citep{zhang2022contrastive}. While promising, fine-tuning recreates traditional machine learning pipelines (e.g., labeling, training, etc.), which sacrifices some of the promise of zero-shot models. In contrast, our goal is to avoid any training and any use of labeled data. Concurrent work seeks to robustify CLIP zero-shot predictions against spurious features by debiasing the classifier (i.e., the labels embedding) against harmful concepts \citep{chuang2023debiasing}---but does so via manual specification. In contrast, our work amplifies helpful concepts and automates the process of obtaining debiasing vectors.

\textbf{Debiasing embeddings.}
A parallel line of work seeks to debias text embeddings  \citep{debias_1} \citep{debias_2} \citep{debias_3} \citep{debias_4} and multimodal embeddings \citep{wang2022fairclip, clip_debias2, clip_debias3} by removing subspaces that contain unwanted concepts. We use a similar procedure as a building block. However, these methods either target specific fixed concepts (such as, for example, gender in fairness contexts) or rely on concept annotations, which limits their applicability across a wide range of tasks. In contrast, our method automates getting \textit{both beneficial and unwanted concepts} solely from the task descriptions. Moreover, our goal is simply to add robustness at low or zero-cost; we do not seek to produce fully-invariant representations as is often desired for word embeddings.

\section{RoboShot: Robustifying Zero-Shot Models}
\begin{figure}
    \centering
    \includegraphics[width=.8\textwidth]{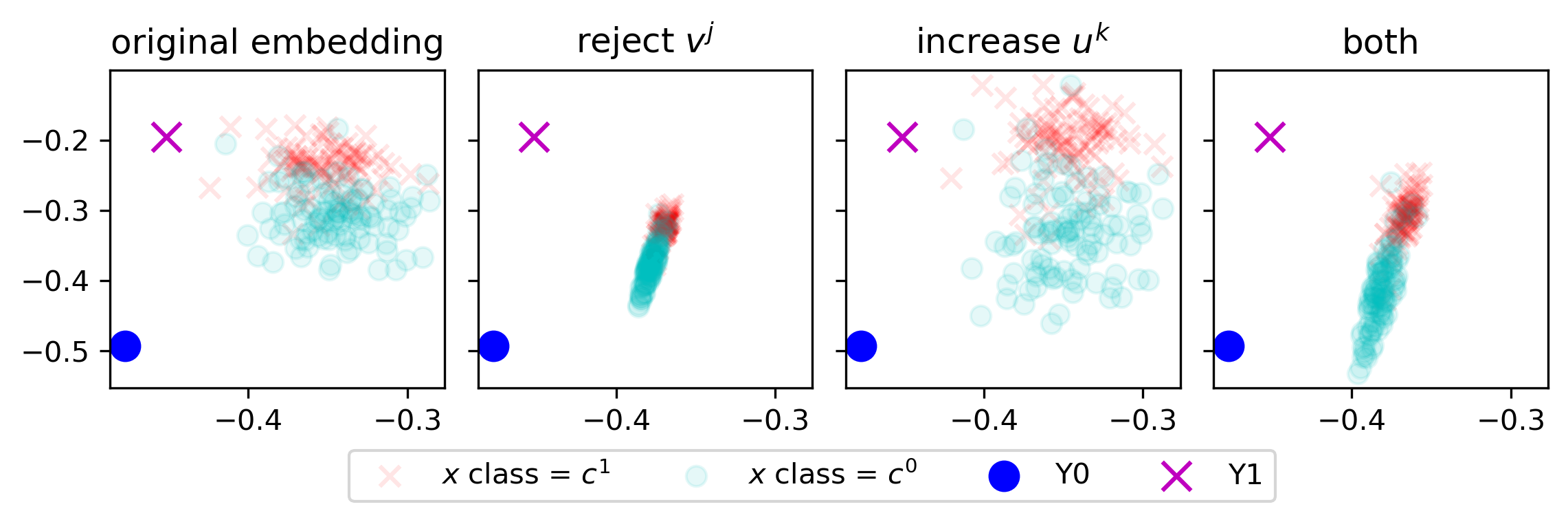}
    \caption{Visualization on CelebA (200 random samples). L-R: (i) original embedding (ii) harmful concept removal (iii) helpful concept addition (iv) full $\SYSNAME$. $Y0$ and $Y1$ are class labels}
    \label{fig:proj_ablation_plot}
    \vspace{-1.5em}
\end{figure}
We are ready to provide our setup and describe the $\SYSNAME$ algorithm. As mentioned before, we use embedding debiasing principles as building blocks. For our purpose, we utilize concepts obtained from language models and get their embeddings to build the beneficial and unwanted concept subspaces to work with. We call these embeddings the \textit{insight representations}.

\subsection{Modeling and setup}
\label{sec:modeling_and_setup}
Suppose that the zero-shot model's latent space contains an (unknown) \emph{concept set}; similar notions have been studied frequently in the literature \citep{dalvidiscovering}. For simplicity, we assume that this concept set is given by the orthonormal vectors $\{z_1, \ldots, z_k\}$. The model's encoder produces, for a particular input, a representation $x$ that is a mixture of concepts $\sum_i \gamma_i z_i$, where $\gamma_i \geq 0$ are weights.

We work with the following theoretical model for zero-shot classification.
For simplicity, we assume that there are two classes. It is straightforward to extend the analysis below to multi-class. We take $\sum_i \alpha_i z_i$ to be the embedding of a datapoint, while $c^0=\sum_i \beta_{i,0} z_i$ is the embedding of the first class and $c^1=\sum_i \beta_{i,1} z_i$ is that of the second. We assume that we have access to $m$ answers $v^1, \ldots, v^m$ from a set of queries to the language model; we describe how these queries are used practically further on. These are given by $v^j = \sum_i \gamma_{i,j} z_i$ for $j \leq m$. We call these \emph{insight representations}.

In the standard approach, the prediction is made by $\hat{y}=\mathbbm{1} \{(\sum_i \alpha_i z_i)^T(\sum_i \beta_{i,0} z_i) < (\sum_i \alpha_i z_i)^T(\sum_i \beta_{i,1} z_i)\}$, so that we predict the class that has the higher inner product with the datapoint's embedding. Next, we assume that each input representation $x$ can be represented by partitioning the mixture components into three groups, 
\begin{equation}
\label{eq:x_modeling}
    x = \sum_{s=1}^S \alpha_s^{\text{harmful}} z_s + \sum_{r=S+1}^{S+R} \alpha_r^{\text{helpful}} z_r + \sum_{b=S+R+1}^{S+R+B} \alpha_b^{\text{benign}} z_b.
\end{equation}
In other words, representations comprise of mixture of embeddings pertaining to harmful, helpful, and benign or neutral concepts---this holds for class and insight representations. In Appendix \ref{sec:concepts}, we empirically show that this assumption holds in real scenarios.

\paragraph{Example.} We illustrate how harmful correlations produce errors on rare slices of data through a standard task setting, Waterbirds \citep{koh2021wilds}. Here the goal is to classify \texttt{landbirds} versus \texttt{waterbirds}, and the background (\texttt{land} or \texttt{water}) is spurious. Suppose that we have these terms relate to concepts such that $z_{\texttt{water}} = - z_{\texttt{land}}$ and $z_{\texttt{waterbird}} = - z_{\texttt{landbird}}$.
\begin{wrapfigure}{R}{0.6\textwidth}
\begin{minipage}{0.6\textwidth}
\begin{algorithm}[H]
        \caption{\textsc{$\SYSNAME$}} \label{alg:roboshot}
	\begin{algorithmic}[1]
		\STATE \textbf{Parameters:}
        Input embedding $x$, class embeddings $c^0, c^1$, harmful insight representations $v^1,\ldots, v^{S}$, helpful insight representations $u^1,\ldots, u^{R}$\\
            \FOR{$j \in \{1,2,\ldots, S\}$}
            \STATE Remove harmful insight $v^j$: set $x \leftarrow x-{\ip{x}{v^j}}/{\ip{v^j}{v^j}}v^j$
            \STATE Renormalize $x=x/\norm{x}$
            \ENDFOR

            \FOR{$k \in \{1,2,\ldots, R\}$}
            \STATE Amplify helpful insight $u_k$: set $x \leftarrow x+{\ip{x}{u^k}}/{\ip{u^k}{u^k}}u^k$

            \ENDFOR         
            \STATE $\hat{y}= \mathbbm{1}\{x^T c^0 < x^Tc^1\}$ 
            \STATE \textbf{Returns:} Robustified zero-shot prediction $\hat{y}$
	\end{algorithmic} 
\end{algorithm}
\end{minipage}
\end{wrapfigure}

Consider a datapoint coming from a data slice rarely encountered in the training set, for instance, an image of landbird over water. Its embedding might be $x=0.7z_{\texttt{water}}+ 0.3 z_{\texttt{landbird}}$. We may also have that $c^{\texttt{waterbird}}=0.4z_{\texttt{water}}+0.6z_{\texttt{waterbird}}$ and $c^{\texttt{landbird}}=0.4z_{\texttt{land}}+0.6z_{\texttt{landbird}}$.
Then, $x^T c^{\texttt{waterbird}}= 0.1 > x^T c^{\texttt{landbird}}= -0.1$, which gives us waterbird prediction, and is incorrect. This is caused by the presence of harmful components in \emph{both} the class embedding (caused by seeing too many images with water described as waterbirds) and the datapoint embedding (where the water background appears). 
Our goal is to \emph{remove} harmful components (the $z_s$'s) and \emph{boost} helpful components (the $z_r$'s)---without labels or training. Our approach follows. 

%



\subsection{$\SYSNAME$: Robustifying zero-shot inference}

We describe $\SYSNAME$  in Algorithm~\ref{alg:roboshot}. It uses representations of insights from language models to shape input and class embeddings to remove harmful components and boost helpful ones. Figure \ref{fig:proj_ablation_plot} is helpful in understanding the intuition behind these procedures. Note how unhelpful directions are neutralized while perpendicular directions are boosted. 

\paragraph{Obtaining insight representations from LMs.} The first question is how to obtain insight representations in a zero-shot way-- we use \textit{textual} descriptions of harmful and helpful concepts by querying language models using \textit{only the task description}. For example, in the Waterbirds dataset, we use the prompt ``What are the biased/spurious differences between waterbirds and landbirds?''. We list the details of the prompts used in Appendix \ref{appendix:prompt}. Let $s^1, s^2$ be the text insights obtained from the answer (e.g., \{`\texttt{water background},' `\texttt{land background}'\}). We obtain a spurious insight representation by taking the difference of their embedding $v = ({g(s^1)-g(s^2)})/{\norm{g(s^1)-g(s^2)}}$, where $g$ is the text encoder of our model. 
In addition to attempting to discover harmful correlations, we seek to discover helpful components in order to boost their magnitudes past the harmful ones. 
We obtain insight representations 
using language models. For example, we ask ``What are the true characteristics of waterbirds and landbirds?' and obtain e.g., \{`\texttt{short beak},' `\texttt{long beak}'\}. The remainder of the procedure is identical to the case of harmful components. 

Prompting a language model is typically inexpensive, which will enable obtaining multiple insight vectors $\tilde{v}^1, \ldots, \tilde{v}^m$. From these, we obtain an orthogonal basis $v^1, \ldots, v^m$ separately for harmful and helpful components using standard matrix decomposition methods. Thus we have access to recovered subspaces spanned by such components. 

\paragraph{Removing and boosting components.}
$\SYSNAME$ applies simple vector rejection to mitigate harmful components (lines 2-5 of Algorithm~\ref{alg:roboshot}) and boosts helpful ones (lines 6-9). To see the impact of doing so, we return to our earlier example. 
\noindent Suppose that we have a single harmful insight  $v^{\text{harmful}}=0.9z_{\texttt{water}}+0.1z_{\texttt{landbird}}$ and a single helpful insight $v^{\text{helpful}}=0.1z_{\texttt{water}}+0.9z_{\texttt{landbird}}$. Note that even these insights can be imperfect: they do not uniquely identify what are harmful or helpful concepts, as they have non-zero weights on other components. 

From removing the harmful component (ignoring normalization for ease of calculation), we obtain $\hat{x} \leftarrow x-\cfrac{\ip{x}{v^{\text{harmful}}}}{\ip{v^{\text{harmful}}}{v^{\text{harmful}}}}v^{\text{harmful}} =  -0.0244z_{\texttt{water}}+0.2195z_{\texttt{landbird}}$. Then, we already we have that $x^T c^{\texttt{waterbird}}= -0.1415 < x^T c^{\texttt{landbird}}= 0.1415$, thus the correct class is obtained. From a single insight we have neutralized a harmful correlation and corrected what had been an error. Adding in the helpful component further helps. Using vector addition equation in Algorithm \ref{alg:roboshot} line 7, we obtain $-0.0006z_{\texttt{water}}+0.4337z_{\texttt{landbird}}$. This further increases our margin. Note that it is not necessary to fully neutralize (i.e., to be fully invariant to) spurious or harmful components in our embeddings. The only goal is to ensure, as much as possible, that their magnitudes are reduced when compared to helpful components (and to benign components). In Section \ref{sec:theory}, we provide a theoretical model for the magnitudes of such components and characterize the conditions under which it will be possible to correct zero-shot errors.
\begin{wrapfigure}{R}{0.6\textwidth}
\begin{minipage}{0.6\textwidth}
\begin{algorithm}[H]
        \caption{Label-free adaptation} \label{alg:lff}
	\begin{algorithmic}[1]
		\STATE \textbf{Parameters:}
        Input embedding matrix $X$, $\SYSNAME$ projected embedding matrix $X_{proj}$, spurious insight representations $v$, useful insights representations $u$, class embeddings $c^0, c^1$, epoch number $e$
            \STATE Initialize $\Pi = X_{proj}X^{\dagger}$
            \FOR{epoch in $1,2,\ldots, e$}
            \STATE $\Pi_{i+1} \leftarrow \argmin_{\Pi} \E_{x}[\mathcal{L}_{LFA}(\Pi_i x, u, v)] $
            \ENDFOR
            \STATE $\hat{y}= \mathbbm{1}\{\Pi x ^T c^0 < \Pi x^Tc^1\}$ 
            \STATE \textbf{Returns:} Robustified zero-shot prediction $\hat{y}$
	\end{algorithmic} 
\end{algorithm}
\end{minipage}
\end{wrapfigure}

\subsection{Label-free Adaptation (LFA)}
Additionally, we explore the limit of neutralizing harmful and boosting helpful insights in the embedding space via \emph{an alternative adaptation approach} when users seek to maximize robustness and have access to \textit{unlabeled} training set and small labeled validation set (with as few as 100 samples). We learn a feature space parameterized by projection matrix $\Pi : \mathbb{R}^{d} \rightarrow \mathbb{R}^{d}$, where $d$ is embedding dimension. We optimize $\Pi$ so it projects $x$ to a space with minimum dot product with harmful insights $\ip{\Pi x}{v}$, and maximum with the helpful ones $\ip{\Pi x}{u}$. More formally, $\Pi = \argminA_{\Pi} \mathbb{E}_{x} \left [ \mathcal{L}_{LFA}(\Pi x, u, v) \right ]$. The loss is given by
\begin{align*}
\label{eq:lff_equation}
    \mathcal{L}_{LFA}(\Pi x, u, v) = \frac{1}{\left |S  \right |}\sum_{j=1}^{S}\ip{\Pi x}{v^j} - \frac{1}{\left |R  \right |}\sum_{k=1}^{R}\ip{\Pi x}{u^k},
\end{align*}
where $S$ and $R$ are the number of harmful and helpful insights. We observed that the best results are achieved by initializing $\Pi$ as the $\SYSNAME$ projection matrix, $\Pi_{0} = X_{proj} X^{\dagger}$, where $X = \begin{bmatrix}
x_1 & x_2 & \cdots & x_N
\end{bmatrix} $ is the embedding matrix, $X^{\dagger}$ its Moore-Penrose pseudo-inverse, and $X_{proj}$ is the $\SYSNAME$ projection matrix. Algorithm \ref{alg:lff} details LFA algorithm. We draw inspiration from \citep{chen2023project} where the authors learn an orthogonal feature space from a source domain dataset and adapt it to a target domain. In contrast to this approach, our focus is on learning the feature space \textit{without any training labels} and using insights as the only form of supervision.

\section{Theoretical Analysis}\label{sec:theory}
Next, we provide an analysis that characterizes under what conditions $\SYSNAME$ can correct zero-shot errors. First, we consider the following error model on the weights of the representations. For all benign representations, we assume $\alpha_b, \beta_b, \gamma_b \sim \mathcal{N}(0, \sigma_{\text{benign}}^2)$. That is, the magnitudes of benign components are drawn from a Gaussian distribution. The value of $\sigma_{\text{benign}}$ is a function of the amount of data and the training procedure for the zero-shot model. Appendix \ref{sec:concepts} empirically shows that in real scenarios, benign components can be canceled out, indicating that this assumption often holds.

Next, we assume that the insight embedding $v^s = \sum_{i=1}^{k} \gamma_{i,s} z_i$ (where $1 \leq s \leq S$) satisfies the property that for $i \neq s$, $\gamma_{i,s} \sim \mathcal{N}(0, \sigma_{\text{insight}}^2)$, while $\gamma_{s,s}$ is a constant. In other words, the vectors $v^1, \ldots, v^{S}$ spanning the harmful component subspace are well-aligned with genuinely harmful concepts, but are also affected by noise. Similarly, we assume that helpful insights $v^r=\sum_{i=1}^{k} \gamma_{i,r} z_i$ (where $S+1 \leq r \leq S+R)$ satisfy the same property.
We seek to understand the interplay between this noise, benign noise, and the coefficients of the other vectors (i.e., helpful components). Let the result of $\SYSNAME$ with insight representations $v^1, \ldots , v^{S+R}$ be \[\hat{x}=x-\sum_{s=1}^{S}\cfrac{x^Tv^s}{||v^s||^2}v^s+\sum_{r=S+1}^{S+R}\cfrac{x^Tv^r}{||v^r||^2}v^r = \sum_{i=1}^{S+R+B} A_i z_i.\] We first provide a bound on $A_s$, the targeted harmful concept coefficient after applying $\SYSNAME$.

\begin{theorem}
\label{thm:coefficient_harmful_bound}
Under the noise model described above, the post-$\SYSNAME$  coefficient for harmful concept $s$ $(1 \leq s \leq S)$ satisfies
    \[|\E{A_s}| \leq \left|\cfrac{(k-1)\alpha_s \sigma^2_{insight}}{\gamma_{s,s}^2}\right|+\left|\sum_{t=1, t\neq s}^{S+R}\cfrac{\alpha_s \sigma_{insight}^2}{\gamma_{t,t}^2} \right|,\] where $k$ is the number of concepts ($k=S+R+B$).
\end{theorem}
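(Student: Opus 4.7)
The plan is to expand $A_s$, the coefficient of $z_s$ in $\hat{x}$, and to bound $|\mathbb{E}[A_s]|$ by separating the contribution from the insight $v^s$ that is aligned with $z_s$ (the ``diagonal'' term $t=s$) from the contributions of the other $S+R-1$ insights (the ``cross'' terms $t \neq s$). The essential tool throughout is a Gaussian sign-flip symmetry of the noise coefficients $\gamma_{i,t}$ for $i\neq t$, together with the fact that $||v^t||^2$ depends on each such noise coordinate only through its square.

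Using orthonormality of $\{z_i\}$ and that the coefficient of $z_s$ inside $v^t$ is $\gamma_{s,t}$, I would first write
\[
A_s \;=\; \alpha_s \;-\; \frac{x^T v^s}{||v^s||^2}\,\gamma_{s,s} \;-\; \sum_{\substack{t \leq S \\ t \neq s}} \frac{x^T v^t}{||v^t||^2}\,\gamma_{s,t} \;+\; \sum_{t=S+1}^{S+R} \frac{x^T v^t}{||v^t||^2}\,\gamma_{s,t},
\]
and take expectations of the three groups separately, combining at the end via the triangle inequality.

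For the diagonal term, I would expand $x^T v^s = \alpha_s\gamma_{s,s} + \sum_{i\neq s}\alpha_i\gamma_{i,s}$ and rearrange to
\[
\alpha_s - \frac{x^T v^s}{||v^s||^2}\,\gamma_{s,s} \;=\; \alpha_s \cdot \frac{\sum_{i\neq s}\gamma_{i,s}^2}{||v^s||^2} \;-\; \frac{\gamma_{s,s}}{||v^s||^2}\sum_{i\neq s}\alpha_i\gamma_{i,s}.
\]
The second summand has zero expectation: for each $i\neq s$, the map $\gamma_{i,s}\mapsto -\gamma_{i,s}$ leaves $||v^s||^2$ invariant (it depends on $\gamma_{i,s}^2$) and flips the term's sign, so Gaussian symmetry kills it (for benign $\alpha_b\sim \mathcal{N}(0,\sigma^2_{\text{benign}})$ the conclusion also follows directly from independence and mean zero). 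The first summand is nonnegative; using $||v^s||^2 \geq \gamma_{s,s}^2$ and $\mathbb{E}[\gamma_{i,s}^2]=\sigma^2_{\text{insight}}$ for the $k-1$ noise coordinates gives the bound $|\alpha_s|(k-1)\sigma^2_{\text{insight}}/\gamma_{s,s}^2$.

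For each cross term $t\neq s$, I would split $x^T v^t = \alpha_s\gamma_{s,t} + \alpha_t\gamma_{t,t} + \sum_{i\neq s,t}\alpha_i\gamma_{i,t}$ and write
\[
\frac{x^T v^t}{||v^t||^2}\,\gamma_{s,t} \;=\; \frac{\alpha_s\,\gamma_{s,t}^2}{||v^t||^2} \;+\; \frac{\gamma_{s,t}}{||v^t||^2}\Big(\alpha_t\gamma_{t,t}+\sum_{i\neq s,t}\alpha_i\gamma_{i,t}\Big).
\]
The same sign-flip argument applied to $\gamma_{s,t}$ zeros the expectation of the second group (denominator invariant, numerator linear in $\gamma_{s,t}$). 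The first piece gives $\mathbb{E}[\alpha_s\gamma_{s,t}^2/||v^t||^2]$, bounded in absolute value by $|\alpha_s|\sigma^2_{\text{insight}}/\gamma_{t,t}^2$ via $||v^t||^2 \geq \gamma_{t,t}^2$. Summing over $t\in\{1,\dots,S+R\}\setminus\{s\}$ and combining with the diagonal bound via the triangle inequality produces the claimed inequality, since all cross bounds share the sign of $\alpha_s$ so that the sum of their absolute values equals the absolute value of the sum.

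The main obstacle I expect is the symmetry step: one has to carefully check that each noise variable being flipped enters $||v^t||^2$ only through its square (which holds in this noise model because $||v^t||^2=\sum_i \gamma_{i,t}^2$), and that the argument remains valid when some $\alpha_i$ are themselves random (benign) rather than fixed constants. Once that observation is in place the rest reduces to bounding two elementary expectations of the form $\mathbb{E}[\gamma^2/(\gamma^2+\text{other})]$ from above by $\mathbb{E}[\gamma^2]/(\text{constant lower bound on denominator})$.
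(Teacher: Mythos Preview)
Your proposal is correct and follows essentially the same route as the paper: expand $A_s$, separate the $t=s$ ``diagonal'' contribution from the $t\neq s$ ``cross'' contributions, kill the mixed terms in expectation, and bound the surviving positive pieces via $\|v^t\|^2\geq \gamma_{t,t}^2$ together with $\E[\gamma_{i,t}^2]=\sigma_{\text{insight}}^2$. If anything, your sign-flip symmetry argument (that $\|v^t\|^2$ depends on each off-diagonal $\gamma_{i,t}$ only through its square, so flipping it leaves the denominator invariant while negating the numerator) is a cleaner and more rigorous justification for why the odd terms vanish than the paper's appeal to ``independence,'' which at one point replaces the random denominator by $\gamma_{t,t}^2$ inside an expectation whose integrand has no fixed sign.
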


The proof is included in Appendix \ref{appendix:theory_details:main_results}. The theorem illustrates how and when the rejection component of $\SYSNAME$ works---it scales down harmful coefficients at a rate inversely proportional to the harmful coefficients of the insight embeddings.
As we would hope, when insight embeddings have larger coefficients for harmful vectors (i.e., more precise in specifying non-useful terms), $\SYSNAME$ yields better outcomes. In addition, we observe that the harmful coefficients decrease when the insight embeddings have less noise. In fact, we have that
$\lim_{\sigma_{insight} \to 0}A_s=0$ --- the case of perfectly identifying harmful, helpful concepts.

Next, we provide a bound on $A_r$, the post-$\SYSNAME$ coefficient of a targeted helpful concept.

\begin{theorem}
\label{thm:coefficient_helpful_bound}
With an additional assumption $\alpha_s \leq 0 \ (1 \leq s \leq S)$ under the described noise model, the post-$\SYSNAME$ \ coefficient for helpful concept $r \ (S+1 \leq r \leq S+R)$ satisfies \[\E{A_r} \geq \left( 1+\cfrac{\gamma_{r,r}^2}{\gamma_{r,r}^2+(k-1)\sigma_{insight}^2} \right) \alpha_{r}. \] 
\end{theorem}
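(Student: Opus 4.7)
The plan is to expand $A_r$ in the orthonormal concept basis by unrolling the $\SYSNAME$ updates of Algorithm~\ref{alg:roboshot},
\[
A_r \;=\; \alpha_r \;-\; \sum_{s=1}^{S}\frac{\langle x,v^s\rangle}{\|v^s\|^2}\,\gamma_{r,s} \;+\; \sum_{r'=S+1}^{S+R}\frac{\langle x,v^{r'}\rangle}{\|v^{r'}\|^2}\,\gamma_{r,r'},
\]
and then isolate the single ``self-boost'' summand at $r'=r$ whose projection axis is aligned with the target concept. The two things left to show are (i) the expectation of this self-boost is at least $\alpha_r\gamma_{r,r}^2/(\gamma_{r,r}^2+(k-1)\sigma_{\text{insight}}^2)$, and (ii) the remaining $S+R-1$ ``cross'' contributions have non-negative expectation under the extra hypothesis $\alpha_s\le 0$.

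For (i), write $\|v^r\|^2 = \gamma_{r,r}^2 + W_r$ with $W_r = \sum_{i\neq r}\gamma_{i,r}^2$, and observe that, conditional on the magnitudes $\{|\gamma_{i,r}|\}_{i\neq r}$, the signs of the Gaussians $\gamma_{i,r}$ for $i\neq r$ are independent and uniform. The tower property then eliminates every summand of $\gamma_{r,r}\langle x,v^r\rangle = \alpha_r\gamma_{r,r}^2 + \gamma_{r,r}\sum_{i\neq r}\alpha_i\gamma_{i,r}$ except the deterministic one, so
\[
\E\!\left[\frac{\langle x,v^r\rangle\,\gamma_{r,r}}{\|v^r\|^2}\right] \;=\; \alpha_r\gamma_{r,r}^2\,\E\!\left[\frac{1}{\gamma_{r,r}^2 + W_r}\right] \;\ge\; \frac{\alpha_r\gamma_{r,r}^2}{\gamma_{r,r}^2+(k-1)\sigma_{\text{insight}}^2},
\]
where the final step combines $\E[W_r]=(k-1)\sigma_{\text{insight}}^2$ with Jensen's inequality applied to the convex map $w \mapsto 1/(\gamma_{r,r}^2+w)$ (this requires $\alpha_r\ge 0$ to keep the inequality in the stated direction, an implicit condition when $r$ is a truly helpful concept). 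Adding the original $\alpha_r$ factor already in $A_r$ produces the $(1+\gamma_{r,r}^2/(\gamma_{r,r}^2+(k-1)\sigma_{\text{insight}}^2))\alpha_r$ lower bound in the statement.

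For (ii), the same conditioning trick reduces each remaining $\E[\langle x,v^t\rangle\gamma_{r,t}/\|v^t\|^2]$ to $\alpha_r\,\E[\gamma_{r,t}^2/\|v^t\|^2]$, since the leading piece $\alpha_t\gamma_{t,t}\gamma_{r,t}$ carries a uniform random sign and vanishes under conditioning. The $R-1$ other-helpful contributions enter with a $+$ sign and are therefore automatically non-negative, while the $S$ harmful contributions enter with a $-$ sign, and this is exactly where the hypothesis $\alpha_s\le 0$ must be invoked through the explicit structure of $\langle x,v^s\rangle$, whose leading piece $\alpha_s\gamma_{s,s}$ is now non-positive. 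I expect this last step to be the main obstacle: each harmful rejection contributes a potentially negative $O(\alpha_r\sigma_{\text{insight}}^2)$ quantity to $\E[A_r]$, and the combination of $\alpha_s\le 0$ with a careful accounting of the $\langle x, v^s\rangle$ sign is what keeps the total non-negative and preserves the clean lower bound on $\E[A_r]$.
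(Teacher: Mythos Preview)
Your plan is the paper's: same coefficient expansion, same Jensen step on $w\mapsto 1/(\gamma_{r,r}^2+w)$ for the self-boost, and sign-flip conditioning for the cross terms. Part~(i) matches the paper's addition-only argument, and your remark that Jensen needs $\alpha_r\ge 0$ is correct and left implicit in the paper. On the $R-1$ other-helpful cross terms you are in fact sharper: the paper asserts they vanish, whereas your reduction shows each equals $\alpha_r\,\E[\gamma_{r,t}^2/\|v^t\|^2]\ge 0$, which is all a lower bound needs.

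The gap is exactly where you flag it, and your proposed fix does not close it. Your own conditioning already shows that for a harmful index $s$ the surviving contribution to $\E[A_r]$ is $-\alpha_r\,\E[\gamma_{r,s}^{2}/\|v^s\|^2]$: the leading piece $\alpha_s\gamma_{s,s}\gamma_{r,s}$ and every other off-diagonal piece have died under the sign flip, so the residual depends on $\alpha_r$, not on any $\alpha_s$. The hypothesis $\alpha_s\le 0$ is therefore inert at this point---there is no $\alpha_s$ left in the expectation to exploit---and the term really is $\le 0$ when $\alpha_r\ge 0$. The paper handles this step by splitting $\sum_i$ into $\{i=q\}$ and $\{i\neq q\}$, zeroing the second part by symmetry, and declaring the $i=q$ residual to be $-\E\bigl[\sum_s \alpha_s\gamma_{q,s}^2/T_s\bigr]\ge 0$ via $\alpha_s\le 0$; but the $i=q$ term is algebraically $\alpha_q\gamma_{q,s}^2$, not $\alpha_s\gamma_{q,s}^2$. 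So the mechanism you were reaching for is precisely the one the paper displays, and it rests on an $\alpha_s$/$\alpha_q$ slip. Your instinct that this step is the main obstacle is correct.
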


Refer to Appendix \ref{appendix:theory_details:main_results} for the proof. Theorem \ref{thm:coefficient_helpful_bound} implies the helpful coefficients are scaled up at a rate inversely proportional to the noise rate $\sigma_{insight}$. When concepts are perfectly identified, i.e. $\sigma_{insight}=0$, the coefficient $\alpha_r$ is doubled, yielding more emphasis on the concept $z_r$ as desired.

\section{Experimental Results}
This section evaluates the following claims:
\begin{itemize}[leftmargin=*]
    \item \textbf{Improving multimodal models (Section \ref{section/exp-performance})}: $\SYSNAME$ improves zero-shot classification robustness of various multimodal models, even outperforming prompting techniques that include spurious insight descriptions (which we do not have access to) in the label prompts.
    
    \item \textbf{Improving language models (Section \ref{section/exp-text})}: $\SYSNAME$ improves zero-shot robustness using LM embeddings for text zero-shot classification, outperforming direct prompting to get predictions.

    \item \textbf{Label-free adaptation (Section \ref{sec:finetuning_results})}: LFA (Algorithm \ref{alg:lff}) can further improve performance with only a small labeled set for validation (100 samples).
    
    \item \textbf{Extracting concepts from LM with varying capacities (Section \ref{section/exp-lms})}: $\SYSNAME$ 
    can extract insights from language models with varying capacities. Improvements persist with weaker LMs.

    
    \item \textbf{Ablations (Section \ref{sec:ablation_projection})}: $\SYSNAME$ benefits from both removing harmful and boosting helpful representations (line 3 and line 7 in $\SYSNAME$ Algorithm \ref{alg:roboshot}).

\end{itemize}

\textbf{Metrics.} We use three metrics: average accuracy \% (AVG), worst-group accuracy \% (WG), and the gap between the two (Gap). While a model that relies on harmful correlations may achieve high AVG when such correlations are present in the majority of the test data, it may fail in settings where the correlation is absent. \textbf{A robust model should have high AVG and WG, with a small gap between them}.

\textbf{Baselines.} We compare against the following sets of baselines:
\begin{enumerate}[leftmargin=*]
    \item \textbf{Multimodal baselines}: (i) vanilla zero-shot classification (\textbf{ZS}) and (ii) ZS with group information (\textbf{Group Prompt ZS}). We use a variety of models: {CLIP} (ViT-B-32 and ViT-L-14) \citep{CLIP_base}, {ALIGN} \citep{jia2021scaling}, and {AltCLIP} \citep{chen2022altclip}. Group Prompt ZS assumes access to spurious or harmful insight annotations and includes them in the label prompt. For instance, the label prompts for waterbirds dataset become  [\texttt{waterbird with water background}, \texttt{waterbird with land background}, \texttt{landbird with water background}, \texttt{landbird with land background}]. We only report Group Prompt ZS results on datasets where spurious insight annotations are available.
    \item \textbf{Language model baselines}: (i) zero-shot classification using language model embeddings, namely BERT \citep{bert_embedder} and Ada \citep{openai_text_embedder} (\textbf{ZS}), (ii) direct prompting to LMs, namely BART-MNLI \citep{lewis2019bart, williams-etal-2018-broad} and ChatGPT \citep{ziegler2019fine} (\textbf{Direct prompting}). We also compare with calibration methods for zero-shot text classification \citep{holtzman2021surface}, results can be found in Appendix \ref{sec:calibration_baseline}.
\end{enumerate}
 
\subsection{Improving multimodal models}
\label{section/exp-performance}
\begin{table}[t!]

\small
\caption{Main results. Best WG and Gap performance \textbf{bolded}, second best \underline{underlined}.}

   \label{tab:main_performance}
   \centering
   \setlength\tabcolsep{4pt} 
   \begin{tabular}{llccccccccc}
     \toprule
    \multirow{2}{*}{Dataset} & \multirow{2}{*}{Model} & \multicolumn{3}{c}{ZS} & \multicolumn{3}{c}{GroupPrompt ZS} & \multicolumn{3}{c}{\textbf{$\SYSNAME$}} \\ 
    \cmidrule(lr){3-5} \cmidrule(lr){6-8} \cmidrule(lr){9-11}
    && AVG & WG($\uparrow$) & Gap($\downarrow$) & AVG & WG($\uparrow$) &Gap($\downarrow$) & AVG & WG($\uparrow$) &Gap($\downarrow$)   \\
    \toprule
     \multirow{3}{*}{Waterbirds}
     & CLIP (ViT-B-32)& 80.7 & 27.9 & 52.8 & 81.6 & \underline{43.5} & \underline{38.1} & 82.0 & \cellcolor{blue!20}{\textbf{54.4}} & \cellcolor{green!20}{\textbf{28.6}}\\
      & CLIP (ViT-L-14)& 88.7 & \underline{27.3} & 61.4 & 70.7 & 10.4 & \underline{60.3} & 79.9 & \cellcolor{blue!20}{\textbf{45.2}} & \cellcolor{green!20}\textbf{34.7} \\
     & ALIGN & 72.0 & \cellcolor{blue!20}{\textbf{50.3}} & \underline{21.7} & 72.5 & 5.8 & 66.7 & 50.9 & \underline{41.0} & \cellcolor{green!20}{\textbf{9.9}} \\
     & AltCLIP & 90.1 & \underline{35.8} & 54.3 & 82.4 & 29.4 & \underline{53.0} & 78.5 & \cellcolor{blue!20}{\textbf{54.8}} & \cellcolor{green!20}{\textbf{23.7}} \\
     \midrule
     \multirow{3}{*}{CelebA}
     & CLIP (ViT-B-32) & 80.1 & 72.7 & 7.4 & 80.4 & \underline{74.9} & \underline{5.5} & 84.8 & \cellcolor{blue!20}{\textbf{80.5}} & \cellcolor{green!20}\textbf{4.3} \\
     & CLIP (ViT-L-14) & 80.6 & \underline{74.3} & \underline{6.3} & 77.9 & 68.9 & 9.0 &  85.5 & \cellcolor{blue!20}{\textbf{82.6}} & \cellcolor{green!20}{\textbf{2.9}} \\
     & ALIGN & 81.8 & \underline{77.2} & \underline{4.6} & 78.3 & 67.4 & 10.9 & 86.3 & \cellcolor{blue!20}{\textbf{83.4}} & \cellcolor{green!20}{\textbf{2.9}} \\
     & AltCLIP & 82.3 & \cellcolor{blue!20}{\textbf{79.7}} & \cellcolor{green!20}{\textbf{2.6}} &  82.3 & \underline{79.0} & 3.3 & 86.0 & 77.2 & 8.8  \\
      \midrule
     \multirow{3}{*}{PACS}
     & CLIP (ViT-B-32) & 96.7 & 82.1 & \underline{14.6} & 97.9 & \underline{82.7} & 15.2 & 97.0 & \cellcolor{blue!20}{\textbf{86.3}} & \cellcolor{green!20}{\textbf{10.7}} \\
     & CLIP (ViT-L-14) & 98.1 & 79.8 & 18.3 & 98.2 & \cellcolor{blue!20}{\textbf{86.6}} & \cellcolor{green!20}{\textbf{11.6}} & 98.1 & \underline{83.9} & \underline{14.2} \\
     & ALIGN & 95.8 & \cellcolor{blue!20}{\textbf{77.1}} & \cellcolor{green!20}{\textbf{18.7}} & 96.5 & 65.0 & 31.5 & 95.0 & \underline{73.8} & \underline{21.2}  \\
     & AltCLIP & 98.5 & 82.6 & 15.9 & 98.6 & \underline{85.4} & \underline{13.2} & 98.7 & \cellcolor{blue!20}{\textbf{89.5}} & \cellcolor{green!20}{\textbf{9.2}}  \\
     \midrule
     \multirow{3}{*}{VLCS}
     & CLIP (ViT-B-32) & 75.6 & 20.5 & 55.1 & \multicolumn{3}{c}{-} & 76.5 & \cellcolor{blue!20}{\textbf{33.0}} & \cellcolor{green!20}{\textbf{43.5}} \\
     & CLIP (ViT-L-14) & 72.6 & 4.20 & 68.4 & \multicolumn{3}{c}{-} & 71.1 & \cellcolor{blue!20}{\textbf{12.6}} & \cellcolor{green!20}{\textbf{58.5}} \\
     & ALIGN & 78.8 & 33.0 & 45.8 & \multicolumn{3}{c}{-} & 77.6 & \cellcolor{blue!20}{\textbf{39.8}} & \cellcolor{green!20}{\textbf{37.8}}  \\
     & AltCLIP & 78.3 & 24.7 & \cellcolor{green!20}{\textbf{53.6}} & \multicolumn{3}{c}{-} & 78.9 & \cellcolor{blue!20}{\textbf{25.0}} & 53.9  \\
     \midrule
     CXR14
     & BiomedCLIP & 55.3 & 28.9 & 26.4 & \multicolumn{3}{c}{-} & 56.2& \cellcolor{blue!20}{\textbf{41.6}} & \cellcolor{green!20}{\textbf{14.6}} \\
    \bottomrule
   \end{tabular}
 \end{table}

\textbf{Setup.} We experimented on five binary and multi-class datasets with spurious correlations and distribution shifts: \textbf{Waterbirds} \citep{gdro}, \textbf{CelebA} \citep{celebA}, \textbf{CXR14} \citep{wang2017chestx}, \textbf{PACS} \citep{pacs}, and \textbf{VLCS} \citep{vlcs}. Dataset details are provided in Appendix \ref{appendix:dataset}. For CXR14, we use BiomedCLIP \citep{biomed_clip}-- a variant of CLIP finetuned on biomedical data. All experiments are conducted using frozen pretrained models embeddings. We evaluate on four model variants: \textbf{CLIP} (ViT-B-32 and ViT-L-14), \textbf{ALIGN}, and \textbf{AltCLIP}. 

\textbf{Results.} Table \ref{tab:main_performance} shows that \textbf{$\SYSNAME$ significantly improves the worst group performance (WG)} and maintains (and sometimes also improves) the overall average (AVG) without any auxiliary information (in contrast to Group Prompt, which requires access to spurious insight annotation). Improved robustness nearly across-the-board suggests that both the insights extracted from LMs and the representation modifications are useful. We also provide insights into the rare case where our method does not improve the baseline (e.g., ALIGN model on Waterbirds) in Appendix \ref{sec:failure_explanation}.

\subsection{Improving language models}
\label{section/exp-text}
\textbf{Setup.} We experimented on four text classification datasets: \textbf{CivilComments-WILDS} \citep{civilcomments, koh2021wilds}, \textbf{HateXplain} \citep{mathew2021hatexplain}, \textbf{Amazon-WILDS} \citep{amazon_data, koh2021wilds} and \textbf{Gender Bias} classification dataset \citep{gender_bias_1, gender_bias_2}. We use the default test splits of all datasets. In text experiments, the distinctions between harmful and helpful insights are less clear than for images-- so here we only use harmful vector rejection (line 3 in $\SYSNAME$). CivilComments and HateXplain are toxic classification datasets with unwanted correlation between toxicity labels and mentions of demographics (e.g., male, female, mentions of religions). The datasets are annotated with demographic mentions of each text, and we directly use them to construct $v^j$. For Amazon and Gender Bias datasets, we query LMs with task descriptions. All experiments are conducted using frozen pretrained model embedding. We provide full list of prompts used in Direct Prompting experiments in Appendix \ref{sec:direct_prompting_template}.

\textbf{Results.}
\begin{table}[t!]
\small
\caption{$\SYSNAME$ text zero-shot classification. Best WG \textbf{bolded}, second best \underline{underlined}. We use inference models comparable to BERT embedding model (i.e., BART-MNLI) and to Ada embedding model (i.e., ChatGPT) for direct prompting experiments.}
   \label{tab:text_performance}
   \centering
   \setlength\tabcolsep{4pt}
   \begin{tabular}{lllccccccccc}
     \toprule
    \multirow{2}{*}{Dataset} & \multirow{2}{*}{Model} & \multicolumn{3}{c}{ZS} & \multicolumn{3}{c}{Direct prompting} & \multicolumn{3}{c}{\textbf{$\SYSNAME$}} \\ 
    \cmidrule(lr){3-5} \cmidrule(lr){6-8} \cmidrule(lr){9-11}
    & & AVG & WG($\uparrow$) & Gap($\downarrow$) & AVG & WG($\uparrow$) & Gap($\downarrow$) & AVG & WG($\uparrow$) & Gap($\downarrow$) \\
    \toprule
    \multirow{2}{*}{CivilComments} & BERT & 48.1 & \underline{33.3}  & 14.8 & 32.5 & 15.7 & 16.8 & 49.7 & \cellcolor{blue!20}{\textbf{42.3}} & \cellcolor{green!20}{\textbf{7.4}} \\
      & Ada & 56.2 & \underline{43.2} & 13.0 & 85.6 & 19.2 & 66.4 & 56.6 & \cellcolor{blue!20}{\textbf{44.9}} & \cellcolor{green!20}{\textbf{11.7}} \\
     \midrule
     
    \multirow{2}{*}{HateXplain}  & BERT & 60.4 & 0.0 & 60.4 & 61.2 & \underline{5.3} & 55.9 & 57.3 & \cellcolor{blue!20}{\textbf{14.0}} & \cellcolor{green!20}{\textbf{43.3}}  \\
      & Ada & 62.8 & \underline{14.3} & 48.5 & 55.4 & 12.2 & 43.2 & 63.6 & \cellcolor{blue!20}{\textbf{21.1}} & \cellcolor{green!20}{\textbf{42.5}}\\
    \midrule
    \multirow{2}{*}{Amazon} & BERT & 81.1 & \underline{64.2} & 16.8 & 74.9 & 36.0 & 38.9 & 81.0 & \cellcolor{blue!20}{\textbf{64.4}} & \cellcolor{green!20}{\textbf{16.6}} \\
      & Ada & 81.2 & 63.4 & 17.8 & 80.1& \cellcolor{blue!20}{\textbf{73.5}} & \cellcolor{green!20}{\textbf{6.6}}& 82.9 & \underline{63.8} & 19.1 \\
    \midrule
    \multirow{2}{*}{Gender Bias}  & BERT & 84.8 & 83.7 & 1.1 & 86.1 & 78.4 & 7.6 & 85.1 & \cellcolor{blue!20}{\textbf{84.9}} & \cellcolor{green!20}{\textbf{0.2}} \\
      & Ada & 77.9 & 60.0 & 17.9 & 90.1 & \cellcolor{blue!20}{\textbf{86.6}} & \cellcolor{green!20}{\textbf{3.5}} & 78.0 & \underline{60.1} & 17.9\\

    \bottomrule
   \end{tabular}
 \end{table}
Table \ref{tab:text_performance} shows that \textbf{$\SYSNAME$ also improves zero-shot text classification}, as shown by our consistent boost over the baselines across all datasets on BERT embedding model and BART-MNLI direct prompting. In the Gender Bias and Amazon experiments, RoboShot lifts weaker/older model performance to a level comparable to modern LLMs (ChatGPT). 

\subsection{Label-free Adaptation (LFA)}
\label{sec:finetuning_results}
Next, we evaluate our technique for maximzing robustness when users have access to labeled validation data (as before, we do not use any training data). 
\begin{table}[ht!]
\small
\caption{LFA on CLIP ViT-B-32 embedding. Best WG \textbf{bolded}, second best \underline{underlined}. Best WG in \colorbox{blue!20}{blue}, best AVG in \colorbox{green!20}{green}}
   \label{tab:lff_results}
   \centering
   \begin{tabular}{lccccccc}
     \toprule
    \multirow{2}{*}{Dataset} & \multicolumn{2}{c}{$\SYSNAME$} & \multicolumn{2}{c}{LFA} & \multicolumn{2}{c}{LFA (100 val)}\\ 
    \cmidrule(lr){2-3} \cmidrule(lr){4-5} \cmidrule(lr){6-7} 
    & AVG & WG & AVG & WG & AVG & WG
    \\
    \toprule
    \multirow{1}{*}{Waterbirds} & 82.0 & \underline{54.5} & \colorbox{green!20}{83.8} $\pm$ 0.74  & \colorbox{blue!20}{\textbf{55.2}} $\pm$ 0.75 & 84.2 $\pm$ 1.1 & 53.6 $\pm$ 1.76 \\
     \midrule
     
    \multirow{1}{*}{CelebA}  & 84.8 & 80.5 & \colorbox{green!20}{86.7} $\pm$ 0.811 & \underline{83.4} $\pm$ 1.02 & 86.5 $\pm$ 0.72 & \colorbox{blue!20}{\textbf{83.8}} $\pm$ 1.17 \\
    \midrule
    
    \multirow{1}{*}{PACS}  & 95.6 & 79.7 & 96.6 $\pm$ 0.43 & \colorbox{blue!20}{\textbf{84.3}} $\pm$ 1.3 & \colorbox{green!20}{96.9} $\pm$ 0.38 & \underline{82.5} $\pm$ 2.16 \\
    
    \midrule
    \multirow{1}{*}{VLCS} & 74.1 & 25.0 & 76.3 $\pm$ 1.27 & \underline{36.5} $\pm$ 5.0 & \colorbox{green!20}{77.0} $\pm$ 0.35 & \colorbox{blue!20}{\textbf{37.4}} $\pm$ 3.34 \\
    \bottomrule
   \end{tabular}
 \end{table}
\textbf{Setup.} We run LFA (Algorithm \ref{alg:lff}) across 5 different random seeds and report the mean and standard deviation test results from the model with the best validation performance. Table \ref{tab:lff_results} shows results from using only 100 random validation samples (LFA 100 val) and the full validation set (LFA). We use WILDS \citep{koh2021wilds} default splits in Waterbirds and CelebA, and randomly shuffle 70:20:10 train:test:validation splits in PACS and VLCS. Note that $\SYSNAME$ performance is slightly different from Table \ref{tab:main_performance}, because there we use all samples for test. The training was conducted using two NVIDIA RTX A4000 GPUs, and we report the hyperparameter choices in Appendix \ref{sec:lfa_exp_details}.

\textbf{Results.} LFA gives extra improvements on both AVG and WG. Improvement mostly persists even when using only 100 validation samples. If users have more validation labels, performance can be further improved. This indicates that \textit{LFA can serve as a lightweight training-based alternative} to the fully zero-shot approach $\SYSNAME$ when a small set of labeled validation data is available.  

\subsection{Extracting concepts from LMs with varying capacities}
\label{section/exp-lms}
\begin{table}[t!]
\small
\caption{$\SYSNAME$ with LMs of varying capacity. Best WG \textbf{bolded}, second best \underline{underlined}}
   \label{tab:lm_ablations}
   \centering
   \begin{tabular}{lccccccccccc}
     \toprule
    Dataset & \multicolumn{2}{c}{ZS} & \multicolumn{2}{c}{Ours (ChatGPT)} & \multicolumn{2}{c}{Ours (Flan-T5)} & \multicolumn{2}{c}{Ours (GPT2)} & \multicolumn{2}{c}{Ours (LLaMA)} \\
    \cmidrule(lr){2-3} \cmidrule(lr){4-5} \cmidrule(lr){6-7} \cmidrule(lr){8-9}\cmidrule(lr){10-11}
    & AVG & WG & AVG & WG & AVG & WG & AVG & WG & AVG & WG   \\
    \toprule
    Waterbirds & 80.7 & 27.9 & 82.0 & \cellcolor{blue!20}{\textbf{54.4}} & 72.1 & 32.4 & 88.0 & \underline{39.9} & 84.8 & 36.5 \\
     \midrule
    CelebA & 80.1 & 72.7 & 84.8 & \underline{80.5} & 77.5 & 68.2 & 80.3 & 74.1 & 84.2 & \cellcolor{blue!20}{\textbf{82.0}}\\
    \midrule
    PACS & 96.7 & \underline{82.1} & 97.0 & \cellcolor{blue!20}{\textbf{86.3}} & 96.2 & 80.3 & 97.2 & 74.0 & 94.8 & 71.9\\
    \midrule
    VLCS & 75.6 & 20.5 & 76.5 & \cellcolor{blue!20}{\textbf{33.0}} & 69.6 & 20.5 & 75.5 & \underline{26.1} & 72.0 & 18.2\\
    \bottomrule
   \end{tabular}
 \end{table}
\textbf{Setup.} We use LMs with different capacities: \textbf{ChatGPT} \citep{ouyang2022training}, \textbf{Flan-T5} \citep{chung2022scaling}, \textbf{GPT2} \citep{radford2019language}, and \textbf{LLaMA} \citep{touvron2023llama}, to obtain insights.

\textbf{Results.} Table \ref{tab:lm_ablations} shows that even though the LM strength/sizes correlate with the performance, $\SYSNAME$ with weaker LMs still outperforms zero-shot baselines. We hypothesize, based on Theorem \ref{thm:coefficient_harmful_bound} and \ref{thm:coefficient_helpful_bound}, that insight outputs from weaker/smaller LMs are still precise in specifying the useful and non-useful terms and thus $\SYSNAME$ is able to use the insight embeddings.   

\begin{table}[ht!]
\small
\caption{Ablation. Best WG and Gap performance \textbf{bolded}, second best \underline{underlined}.}
   \label{tab:ablations_main}
   \centering
   \setlength\tabcolsep{1pt}
   \begin{tabular}{llccccccccccccc}
     \toprule
    \multirow{2}{*}{Dataset} & \multirow{2}{*}{Model} & \multicolumn{3}{c}{ZS} & \multicolumn{3}{c}{Ours ($v^j$ only)} & \multicolumn{3}{c}{Ours ($u^k$ only)} & \multicolumn{3}{c}{Ours (both)} \\ 
    \cmidrule(lr){3-5} \cmidrule(lr){6-8} \cmidrule(lr){9-11} \cmidrule(lr){12-14}
    && AVG & WG($\uparrow$) & Gap($\downarrow$) & AVG & WG($\uparrow$) &Gap($\downarrow$) & AVG & WG($\uparrow$) &Gap($\downarrow$) & AVG & WG($\uparrow$) &Gap($\downarrow$)   \\
    \toprule
     \multirow{2}{*}{Waterbirds}
     & CLIP (ViT-B-32) & 80.7 & 27.9 & 52.8 & 82.0 & \underline{50.4} & \underline{31.6} & 82.6 & 30.2 & 52.4 & 83.0 & \cellcolor{blue!20}{\textbf{54.4}} & \cellcolor{green!20}{\textbf{28.6}} \\
      & CLIP (ViT-L-14)& 88.7 & 27.3 & 61.4 & 82.7 & \underline{35.8} & \underline{46.9} & 88.3 & 29.8 & 58.5 & 79.9 & \cellcolor{blue!20}{\textbf{45.2}} & \cellcolor{green!20}{\textbf{34}}.7 \\
     \midrule
     \multirow{2}{*}{CelebA}
     & CLIP (ViT-B-32) & 80.1 & 72.7 & 7.4 & 85.2 & \cellcolor{blue!20}{\textbf{81.5}} & \cellcolor{green!20}{\textbf{3.7}} & 79.6 & 71.3 & 8.3 & 84.8 & \underline{80.5} &  \underline{4.3} \\
     & CLIP (ViT-L-14) & 80.6 & 74.3 & 6.3 & 85.9 & \cellcolor{blue!20}{\textbf{82.8}} & \underline{3.1} & 80.0 & 73.1 & 6.9 & 85.5 & \underline{82.6} & \cellcolor{green!20}{\textbf{2.9}} \\
      \midrule
     \multirow{2}{*}{PACS}
     & CLIP (ViT-B-32) & 96.7 & 82.1 & 14.6 & 97.0 & 83.7 & 13.3 & 96.6 & \underline{84.2} & \underline{12.4} & 97.0 & \cellcolor{blue!20}{\textbf{86.3}} & \cellcolor{green!20}{\textbf{10.7}} \\
     & CLIP (ViT-L-14) & 98.1 & 79.8 & 18.3 & 98.0 & 79.8 & 18.2 & 98.1 & \underline{83.8} & \underline{14.3} & 98.1 & \cellcolor{blue!20}{\textbf{83.9}} & \cellcolor{green!20}{\textbf{14.2}} \\
     \midrule
     \multirow{2}{*}{VLCS}
     & CLIP (ViT-B-32)&  75.6 & 20.5 & 55.1 & 75.6 & 22.7 & 52.9 & 76.4 & \underline{29.5} & \underline{46.9} & 76.5 & \cellcolor{blue!20}{\textbf{33.0}} & \cellcolor{green!20}{\textbf{43.5}} \\
     & CLIP (ViT-L-14) & 72.6 & 4.2 & 68.4 & 70.9 & 6.8 & \underline{64.1} & 73.4 & \underline{8.9} & 64.5 & 71.1 & \cellcolor{blue!20}{\textbf{12.6}} & \cellcolor{green!20}{\textbf{58.5}} \\
     \midrule
     CXR14
     & BiomedCLIP & 55.3 & 28.9 & 26.4 & 55.7 & \cellcolor{blue!20}{\textbf{41.8}} & \cellcolor{green!20}{\textbf{13.9}} & 54.8 & 21.8 & 33.0  & 56.2 & \underline{41.6} & \underline{14.6}
     \\
    \bottomrule
   \end{tabular}
 \end{table}
\subsection{Ablations}
\label{sec:ablation_projection}
\textbf{Setup.} We run $\SYSNAME$ with only harmful component mitigation (reject $v^j$: $\SYSNAME$ line 3), only boosting helpful vectors (amplify $u^k$: $\SYSNAME$ line 7), and both. Due to space constraint, we only include CLIP-based models ablations. Results on all models can be found in Appendix \ref{appendix:addition_exps}.

\textbf{Results.} The combination of both projections often achieves the best performance, as shown in Table \ref{tab:ablations_main}. Figure \ref{fig:proj_ablation_plot} provides insights into the impact of each projection. Rejecting $v^j$ reduces variance in one direction, while increasing $u^k$ amplifies variance in the orthogonal direction. When both projections are applied, they create a balanced mixture. 


\section{Conclusion}
We introduced $\SYSNAME$, a fine-tuning-free system that robustifies zero-shot pretrained models in a truly zero-shot way. Theoretically, we characterized the quantities required to obtain improvements over vanilla zero-shot classification. Empirically, we found that $\SYSNAME$ improves both multi-modal and language model zero-shot performance, has sufficient versatility to apply to various base models, and can use insights from less powerful language models.

\bibliography{iclr2024_conference}
\bibliographystyle{iclr2024_conference}

\appendix
\section*{Appendix}
The appendix contains additional related work, details, proofs, and experimental results. The glossary contains a convenient reminder of our terminology (Appendix \ref{appendix:glossary}).
Appendix \ref{appendix:theory_details} provides the proofs of theorems that appeared in Section \ref{sec:theory}.
In Appendix \ref{appendix:exp_details}, we give more details and analysis of the experiments and provide additional experiment results.
Finally, Appendix \ref{appendix:addition_exps} entails additional experiments combining $\comnivore$ with other methods to highlight its versatility.

\section{Glossary} \label{appendix:glossary}
\label{sec:gloss}
The glossary is given in Table~\ref{table:glossary}.
\begin{table*}[h]
\centering
\begin{tabular}{l l}
\toprule
Symbol & Definition \\
\midrule
$x$ & input vector \\
$X$ & embedding matrix \\
$X_{proj}$ & $\SYSNAME$ projected embedding matrix \\
$y$, $\hat{y}$ & class label, prediction \\
$c^i$ & embedding of class $i$ \\
$z_1, \ldots, z_k$ & The concept vectors consisting of orthonormal vectors \\
$v^i, u^j$ & insight representations \\
$\alpha_{j}$ & The coefficient of input $x$ with respect to the concept $z_j$ (before $\SYSNAME$)\\
$A_{j}$ & The coefficient of transformed input $\hat{x}$ with respect to the concept $z_j$ (after $\SYSNAME$)\\
$\beta_{i, j}$ &  The coefficient of $j$-th class embedding with respect to the concept $z_i$ \\
$\gamma_{i, j}$ &  The coefficient of $j$-th insight vector with respect to the concept $z_i$ \\
$S$ & the number of harmful concepts \\
$R$ & the number of helpful concepts \\
$B$ & the number of benign concepts \\
$g$ & text encoder to get embeddings \\
$s^i$ & text string for insight vectors \\
$\sigma_{\text{benign}}, \sigma_{\text{insight}}$ & noise rates in the coefficients of benign/insight concepts \\


\toprule
\end{tabular}
\caption{
	Glossary of variables and symbols used in this paper.
}
\label{table:glossary}
\end{table*}
\section{Extended Related Work}
\label{sec:extended_related_work}
\textbf{Using language to improve visual tasks.} A large body of work has shown the efficacy of using language to improve performance on vision tasks \cite{CLIP_base, frome2013devise, le2020using}. Most relevant are those that focus on robustness, such as \cite{spurious_finetune_1}, which uses text descriptions of spurious attributes in a fine-tuning loss to improve robustness. In contrast to these works, we focus on using textual concepts to improve zero-shot model robustness---without fine-tuning. Other zero-shot works attempt to provide improvements to accuracy. For example, \citep{novack2023chils} increases zero-shot accuracy by first expanding the class options into their subclasses (e.g., dog into labrador and golden retriever) and producing a final prediction by mapping them back to the superclass. 
The most closely related of these to our work is \cite{menon2022visual, maniparambil2023enhancing}, where GPT-3 generated class descriptors are first generated, then CLIP predictions scores are grounded by additive decomposition of scores from the prompts with the descriptors. This approach also does not require fine-tuning. However, it focuses mainly on grounding through prompting with class descriptors, while ours focuses on removing harmful concepts and increasing helpful concepts in the embedding space---enabling improved robustness on difficult slices.

\textbf{Language models as priors.} The basis of our work is the observation that language models contain information that can serve as a prior for other tasks. \cite{kiciman2023causal} finds that LLMs can perform causal reasoning tasks, substantially outperforming existing methods. \cite{choi2022lmpriors} prompts LLMs for task-specific priors, leading to substantial performance improvements in feature selection, reinforcement learning, and causal discovery. Our work shares the spirit of these approaches in using the insights embedded in language models to enhance zero-shot robustness.
\section{Theory details} \label{appendix:theory_details}

\subsection{Harmful concept removal}
As the simplest form of $\SYSNAME$, we consider the case of $\SYSNAME$ the harmful concept removal only, without boosting helpful concepts. Recall our noise model: 
\[x = \sum_{s=1}^{S}\alpha_s z_s + \sum_{r=S+1}^{S+R}\alpha_r z_r + \sum_{b=S+R+1}^{S+R+B}\alpha_b z_b \]
\[v^t = \sum_{s=1}^{S}\gamma_{s,t} z_s + \sum_{r=S+1}^{S+R}\gamma_{r,t} z_r + \sum_{b=S+R+1}^{S+R+B}\gamma_{b,t} z_b \qquad (1\leq t \leq\ S). \] Again, we assume that benign coefficients are drawn from a zero-centered Gaussian distribution, i.e.  $\alpha_b, \gamma_{b,t} \sim \mathcal{N}(0, \sigma_{benign})$ and also helpful coefficients and non-target harmful coefficients are assumed to be drawn from a Gaussian distribution, i.e. $\gamma_{q, t} \sim \mathcal{N}(0, \sigma_{insight})$, where $1\leq q \leq R$, $q\neq t$ so that only $\gamma_{t,t}$ is a constant. 

\subsubsection{Effects on harmful coefficients}

Now we prove the following theorem.

\begin{theorem}\label{thm:coefficient_removal_harmful_bound}
Under the noise model described above, the post-removal coefficient $A_s$ for harmful concept $z_s$ satisfies
    \[|\E{A_s}| \leq \left|\cfrac{(k-1)\alpha_s \sigma^2_{insight}}{\gamma_{s,s}^2}\right|+\left|\sum_{t\neq s}^{S}\cfrac{\alpha_s \sigma_{insight}^2}{\gamma_{t,t}^2}\right|,\] where $k$ is the number of concepts ($k=S+R+B$).
\end{theorem}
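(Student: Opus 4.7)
The plan is to expand $A_s$ directly from the algorithm, isolate the ``self'' contribution of the insight $v^s$ aimed at the target concept from the ``cross'' contaminations induced by the other harmful insights $v^t$ ($t\neq s$), and then bound each piece by exploiting the Gaussian zero-mean structure of the noise coefficients. Since we restrict to the rejection step, the post-removal coefficient of $z_s$ is
\begin{equation*}
A_s \;=\; \alpha_s \;-\; \sum_{t=1}^{S} \frac{\langle x, v^t\rangle}{\|v^t\|^2}\,\gamma_{s,t}.
\end{equation*}
I would split the sum into $t=s$ and $t\neq s$ and take expectations on each piece separately, using independence of the noise coefficients to kill cross products.

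For the self-term $t=s$, expand $\langle x, v^s\rangle = \alpha_s\gamma_{s,s} + \sum_{i\neq s}\alpha_i\gamma_{i,s}$ and $\|v^s\|^2 = \gamma_{s,s}^2 + \sum_{i\neq s}\gamma_{i,s}^2$. Because all $\gamma_{i,s}$ for $i\neq s$ are independent zero-mean Gaussians with variance $\sigma_{\text{insight}}^2$ and are independent of $\alpha_i$, taking expectations yields $\E[\langle x,v^s\rangle] = \alpha_s\gamma_{s,s}$ and $\E[\|v^s\|^2] = \gamma_{s,s}^2 + (k-1)\sigma_{\text{insight}}^2$. Approximating the ratio by the ratio of expectations (justified by concentration of $\|v^s\|^2$ around its mean in the regime $\sigma_{\text{insight}}\ll \gamma_{s,s}$, or by a first-order Taylor expansion), the contribution of the self-term to $A_s$ is approximately
\begin{equation*}
\alpha_s - \frac{\alpha_s\gamma_{s,s}^2}{\gamma_{s,s}^2+(k-1)\sigma_{\text{insight}}^2} \;=\; \alpha_s\cdot\frac{(k-1)\sigma_{\text{insight}}^2}{\gamma_{s,s}^2+(k-1)\sigma_{\text{insight}}^2},
\end{equation*}
whose absolute value is at most $\bigl|(k-1)\alpha_s\sigma_{\text{insight}}^2/\gamma_{s,s}^2\bigr|$, the first term on the right-hand side of the bound.

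For each cross term $t\neq s$, the key observation is that $\gamma_{s,t}$ appears both as a multiplicative factor and inside $\langle x,v^t\rangle = \alpha_t\gamma_{t,t} + \sum_{i\neq t}\alpha_i\gamma_{i,t}$. By independence of distinct $\gamma_{i,t}$'s, every summand in $\gamma_{s,t}\langle x,v^t\rangle$ has mean zero except the one matching the index $s$, giving $\E[\gamma_{s,t}\langle x,v^t\rangle] = \alpha_s\E[\gamma_{s,t}^2] = \alpha_s\sigma_{\text{insight}}^2$. Using the same denominator approximation $\|v^t\|^2 \approx \gamma_{t,t}^2$ (dominant when noise is small, or by concentration of $\|v^t\|^2$ around its mean), the expected contribution of each cross term is approximately $\alpha_s\sigma_{\text{insight}}^2/\gamma_{t,t}^2$, and summing over $t\neq s$ and applying the triangle inequality supplies the second term of the bound. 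Combining the two pieces yields the claimed inequality.

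The main obstacle is making the ratio approximation $\E[X/Y]\approx \E[X]/\E[Y]$ rigorous, since the numerator and denominator in each projection are correlated (they share the same $\gamma_{i,t}$'s). I would address this either by a concentration argument on the denominator (a Gaussian chaos/chi-square bound shows $\|v^t\|^2$ is tightly concentrated around $\gamma_{t,t}^2+(k-1)\sigma_{\text{insight}}^2$), or by a first-order Taylor expansion whose higher-order corrections are absorbed into the implicit inequality via the triangle step. A secondary subtlety is to ensure that after bounding each piece by its absolute value one still obtains the stated sum-of-absolute-values form; this follows immediately from the triangle inequality applied once the expectation is split into the self and cross contributions.
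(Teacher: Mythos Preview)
Your decomposition into self ($t=s$) and cross ($t\neq s$) contributions is correct in spirit, and you correctly identify which pieces of the numerator survive. However, the step you yourself flag as ``the main obstacle'' is a genuine gap: the approximation $\E[X/Y]\approx\E[X]/\E[Y]$ (whether via Taylor expansion or concentration of $\|v^t\|^2$) is not how the paper proceeds, and making it rigorous along your proposed lines would introduce extra error terms that are absent from the stated bound.

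The paper avoids any approximation of the ratio. The two moves you are missing are:
\begin{enumerate}
\item Split first by the \emph{numerator index} $i$, not by $t$. For every $i\neq s$ one has
\[
\E\!\left[\frac{\alpha_i\,\gamma_{i,t}\,\gamma_{s,t}}{\|v^t\|^2}\right]=0
\]
exactly, because the denominator $\|v^t\|^2=\sum_l\gamma_{l,t}^2$ is even in each noisy coordinate while the numerator is odd in $\gamma_{i,t}$ (when $i\neq t$) or in $\gamma_{s,t}$ (when $i=t$). This symmetry argument handles the full ratio at once, so there is nothing to approximate here; your version computes $\E[\gamma_{s,t}\langle x,v^t\rangle]$ with the denominator stripped off, which is precisely where correlation bites.
\item After step 1 only the $i=s$ terms remain, and these have numerators $\alpha_s\gamma_{s,t}^2$ with the fixed sign of $\alpha_s$. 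Now the trivial pointwise bound $\|v^t\|^2\geq\gamma_{t,t}^2$ is legitimate and gives
\[
\left|\E\!\left[\frac{\alpha_s\gamma_{s,t}^2}{\|v^t\|^2}\right]\right|
\;\le\;\frac{|\alpha_s|}{\gamma_{t,t}^2}\,\E[\gamma_{s,t}^2]
\;=\;\frac{|\alpha_s|\,\sigma_{\text{insight}}^2}{\gamma_{t,t}^2}
\]
for $t\neq s$, and, after rewriting $\alpha_s-\alpha_s\gamma_{s,s}^2/\|v^s\|^2=\alpha_s\sum_{i\neq s}\gamma_{i,s}^2/\|v^s\|^2$, the analogous bound $|\alpha_s|(k-1)\sigma_{\text{insight}}^2/\gamma_{s,s}^2$ for $t=s$. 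No concentration or Taylor step is needed.
\end{enumerate}
In short, the missing idea is that once the odd-in-noise pieces are killed \emph{exactly} by symmetry of the full ratio, the surviving ratios have integrands of a single sign, so the elementary lower bound $\|v^t\|^2\geq\gamma_{t,t}^2$ on the denominator is both rigorous and sufficient.
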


\begin{proof}
Let $\hat{x}$ be the output of harmful concept removal procedure such that
\begin{align*}
\hat{x} &= x - \sum_{s=1}^S \cfrac{x^Tv^s}{\norm{v^s}^2}v^s \\
        &= \sum_{i=1}^{k} \alpha_i z_i - \sum_{s=1}^{S} \cfrac{\sum_i^{k}\alpha_i \gamma_{i,s}}{\sum_{l=1}^k \gamma_{l,s}^2} (\sum_{j=1}^k \gamma_{j,s} z_j) \\
\end{align*}
As the first step, we sort out the coefficients of features. For notational convenience, let $T_s=\sum_{l=1}^{k} \gamma_{l, s}^2$. Then,
\begin{align*}
\hat{x} &= \sum_{i=1}^{k} \alpha_i z_i - \sum_{s=1}^{S} \cfrac{\sum_{i=1}^{k}\alpha_i \gamma_{i,s}}{T_{s}} (\sum_{j=1}^k \gamma_{j,s} z_j) \\
        &= \sum_{i=1}^{k}  \alpha_i z_i - \sum_{s=1}^{S}\sum_{i=1}^{k}\sum_{j=1}^k  \cfrac{\alpha_i \gamma_{i,s}\gamma_{j,s} }{T_{s}}z_j\\
        &= \sum_{j=1}^{k}  \alpha_j z_j - \sum_{j=1}^k \sum_{s=1}^{S}\sum_{i=1}^{k} \cfrac{\alpha_i \gamma_{i,s}\gamma_{j,s} }{T_{s}}z_j\\
        &= \sum_{j=1}^{k}  \left(\alpha_j - \sum_{s=1}^{S}\sum_{i=1}^{k} \cfrac{\alpha_i \gamma_{i,s}\gamma_{j,s} }{T_{s}}\right) z_j\\
\end{align*}

Thus we can get the expression for the coefficient of the target feature $z_s \  (1\leq s \leq S)$,

\[ A_s = \alpha_s - \sum_{t=1}^S\sum_{i=1}^{k} \cfrac{\alpha_i\gamma_{i,t} \gamma_{s,t}}{T_t} \]

Next, we get the bound of the absolute expectation $\left| \E{A_s} \right|$.

\begin{align*}
\left| \E{A_s} \right| &=  \left| \E{\alpha_s - \sum_{t=1}^S\sum_{i=1}^{k} \cfrac{\alpha_i\gamma_{i,t} \gamma_{s,t}}{\sum_{l=1}^{k}\gamma_{l,t}^2}} \right| \\
                       &\leq \left| \E{\alpha_s - \sum_{t=1}^S \cfrac{\alpha_s \gamma_{s,t}^2}{\sum_{l=1}^{k}\gamma_{l,t}^2}} \right| + \left| \sum_{t=1}^S \E{\cfrac{\sum_{i=1, i\neq s}^S\alpha_i\gamma_{i,t}\gamma_{s,t}}{\sum_{l=1}^{k}\gamma_{l,t}^2}}\right| \\      
\end{align*}
Here, the second term on RHS is 0 by independence, i.e.
\begin{align*}
\left | \E{\cfrac{\sum_{i=1, i\neq s}^S\alpha_i\gamma_{i,t}\gamma_{s,t}}{\sum_{l=1}^{k}\gamma_{l,t}^2}} \right | & \leq \left | \E{\cfrac{\sum_{i=1, i\neq s}^k\alpha_i\gamma_{i,t}\gamma_{s,t}}{\gamma_{t,t}^2}} \right |\\
                                         & = \left | \sum_{i=1, i\neq s}^k\cfrac{\alpha_{i}}{\gamma_{t,t}^2}\E{\gamma_{i,t}\gamma_{s,t}} \right | = 0
\end{align*}
since $\E{\gamma_{s,t}\gamma_{j,t}}=0$ by independence. Now we split the first term and get the bounds separately.

\begin{align*}
\left| \E{A_s} \right| &\leq \left| \E{\alpha_s - \sum_{t=1}^S \cfrac{\alpha_s \gamma_{s,t}^2}{\sum_{l=1}^{k}\gamma_{l,t}^2}} \right| \\
                       &\leq \left| \E{\alpha_s - \cfrac{\alpha_s \gamma_{s,s}^2}{\sum_{l=1}^k \gamma_{l,s}^2}} \right| + \left| \sum_{t=1, t\neq s}^S \E{\cfrac{\alpha_{s}\gamma_{s,t}^2}{\sum_{l=1}^k \gamma_{l, t}^2}} \right|
\end{align*}

The upper bound for the first term can be obtained by
\begin{align*}
\left| \E{\alpha_s - \cfrac{\alpha_s \gamma_{s,s}^2}{\sum_{l=1}^k \gamma_{l,s}^2}} \right| & =\left| \E{-\cfrac{\sum_{i \neq s}^k \alpha_s \gamma_{i,s}^2}{\sum_{l=1}^k \gamma_{l,s}^2}} \right|\\
        & \leq  \left| \E{\cfrac{\sum_{i \neq s}^k \alpha_s \gamma_{i,s}^2}{ \gamma_{s,s}^2}} \right| \\
         & \leq  \left| \cfrac{ \alpha_s }{ \gamma_{s,s}^2} \sum_{i \neq s}^k \E{\gamma_{i,s}^2} \right| \\
         & \leq  \left| \cfrac{ (k-1)\alpha_s \sigma_{insight}^2 }{ \gamma_{s,s}^2} \right|.
\end{align*}
And, for the second term,

\begin{align*}
\left| \sum_{t=1, t\neq s}^S \E{\cfrac{\alpha_{s}\gamma_{s,t}^2}{\sum_{i=1}^k \gamma_{i, t}^2}} \right| &\leq \left| \sum_{t=1, t\neq s}^S \E{\cfrac{\alpha_{s}\gamma_{s,t}^2}{\gamma_{t, t}^2}} \right|\\
               &=\left| \sum_{t=1, t\neq s}^S \cfrac{\alpha_{s}}{\gamma_{t, t}^2}\E{\gamma_{s,t}^2} \right|\\
               &=\left|\sum_{t\neq s}^{S}\cfrac{\alpha_s \sigma_{insight}^2}{\gamma_{t,t}^2}\right|
\end{align*}

Combining two bounds, we get the proposed result.
\[|\E{A_s}| \leq \left|\cfrac{(k-1)\alpha_s \sigma^2_{insight}}{\gamma_{s,s}^2}\right|+\left|\sum_{t\neq s}^{S}\cfrac{\alpha_s \sigma_{insight}^2}{\gamma_{t,t}^2}\right|.\]
\end{proof}

While the constant $(k-1)$ can look daunting since it actually increases as the number of concepts increases, a bound less affected by $\sigma_{insight}^2$ exists as well, scaling down the target coefficient $\alpha_s$.

\begin{corollary}\label{thm:coefficient_removal_harmful_better_bound}
    Under the noise model of Theorem \ref{thm:coefficient_removal_harmful_bound}, the post-removal coefficient for harmful concept $s$ satisfies
    \[|\E{A_s}| \leq \left|\alpha_s \cfrac{(k-1)\sigma^2_{insight}}{\gamma_{s,s}^2+(k-1)\sigma^2_{insight}}\right|+\left|\sum_{t\neq s}^{S}\cfrac{\alpha_s \sigma_{insight}^2}{\gamma_{t,t}^2}\right|,\] where $k$ is the number of concepts ($k=S+R+B$).
\end{corollary}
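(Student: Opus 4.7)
The plan is to revisit the argument for Theorem \ref{thm:coefficient_removal_harmful_bound} and sharpen the first bound by avoiding the crude step $\sum_{l=1}^{k}\gamma_{l,s}^{2} \geq \gamma_{s,s}^{2}$ that was used there. I would reuse the identical decomposition
\[
\bigl|\E{A_s}\bigr| \leq \left|\E{\alpha_s - \cfrac{\alpha_s \gamma_{s,s}^{2}}{\sum_{l=1}^{k}\gamma_{l,s}^{2}}}\right| + \left|\sum_{t=1,\, t\neq s}^{S}\E{\cfrac{\alpha_s \gamma_{s,t}^{2}}{\sum_{l=1}^{k}\gamma_{l,t}^{2}}}\right|,
\]
which is exactly the splitting obtained in the proof of Theorem \ref{thm:coefficient_removal_harmful_bound} after the independence argument eliminates the cross terms $\E{\gamma_{i,t}\gamma_{s,t}}=0$ for $i\neq s$. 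The second term is bounded exactly as before by $\bigl|\sum_{t\neq s}\alpha_s\sigma_{\mathrm{insight}}^{2}/\gamma_{t,t}^{2}\bigr|$, so the entire new content is in sharpening the first term.

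For the first term, set $Y \defeq \sum_{l\neq s}\gamma_{l,s}^{2}$, which is nonnegative and, under the noise model, satisfies $\E{Y} = (k-1)\sigma_{\mathrm{insight}}^{2}$ since $\gamma_{l,s}\sim\mathcal{N}(0,\sigma_{\mathrm{insight}}^{2})$ independently for $l\neq s$. Rewriting,
\[
\alpha_s - \cfrac{\alpha_s \gamma_{s,s}^{2}}{\gamma_{s,s}^{2}+Y} \;=\; \alpha_s\cdot\cfrac{Y}{\gamma_{s,s}^{2}+Y} \;=\; \alpha_s\Bigl(1 - \cfrac{\gamma_{s,s}^{2}}{\gamma_{s,s}^{2}+Y}\Bigr).
\]
The key step is Jensen's inequality applied to the function $y \mapsto \gamma_{s,s}^{2}/(\gamma_{s,s}^{2}+y)$, which is convex on $y \geq 0$. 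This yields
\[
\E{\cfrac{\gamma_{s,s}^{2}}{\gamma_{s,s}^{2}+Y}} \;\geq\; \cfrac{\gamma_{s,s}^{2}}{\gamma_{s,s}^{2}+\E{Y}} \;=\; \cfrac{\gamma_{s,s}^{2}}{\gamma_{s,s}^{2}+(k-1)\sigma_{\mathrm{insight}}^{2}},
\]
and subtracting from $1$ gives
\[
\E{\cfrac{Y}{\gamma_{s,s}^{2}+Y}} \;\leq\; \cfrac{(k-1)\sigma_{\mathrm{insight}}^{2}}{\gamma_{s,s}^{2}+(k-1)\sigma_{\mathrm{insight}}^{2}}.
\]
Multiplying by $|\alpha_s|$ and combining with the unchanged bound on the second term produces exactly the claimed inequality.

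I expect the main obstacle to be nothing deep — just verifying convexity of $y \mapsto \gamma_{s,s}^{2}/(\gamma_{s,s}^{2}+y)$ to apply Jensen's inequality in the correct direction (this function has second derivative $2\gamma_{s,s}^{2}/(\gamma_{s,s}^{2}+y)^{3} \geq 0$), and noting that this improvement is meaningful precisely because the new first term is bounded above by $|\alpha_s|$, reflecting the physical fact that harmful-coefficient rejection can at worst leave $\alpha_s$ unchanged and never amplify it. The second term is left exactly as it was in Theorem \ref{thm:coefficient_removal_harmful_bound} because it arises from cross-interactions with other harmful insight directions and is not tightened by the Jensen step.
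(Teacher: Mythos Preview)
Your proposal is correct and follows essentially the same route as the paper: reuse the two-term decomposition from Theorem~\ref{thm:coefficient_removal_harmful_bound}, leave the second term untouched, and sharpen the first term via Jensen's inequality applied to the convex map $y\mapsto \gamma_{s,s}^2/(\gamma_{s,s}^2+y)$ (the paper phrases the same step as Jensen for $x\mapsto 1/x$ on $\sum_l\gamma_{l,s}^2$, which is equivalent since $\gamma_{s,s}$ is a constant).
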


\begin{proof}
With the identical steps to the proof of Theorem \ref{thm:coefficient_removal_harmful_bound}, we can obtain
\begin{align*}
\left| \E{A_s} \right| &\leq \left| \E{\alpha_s - \sum_{t=1}^S \cfrac{\alpha_s \gamma_{s,t}^2}{\sum_{l=1}^{k}\gamma_{l,t}^2}} \right| \\
                       &\leq \left| \E{\alpha_s - \cfrac{\alpha_s \gamma_{s,s}^2}{\sum_{l=1}^k \gamma_{l,s}^2}} \right| + \left| \sum_{t=1, t\neq s}^S \E{\cfrac{\alpha_{s}\gamma_{s,t}^2}{\sum_{l=1}^k \gamma_{l, t}^2}} \right| \\
                       &\leq \left| \E{\alpha_s - \cfrac{\alpha_s \gamma_{s,s}^2}{\sum_{l=1}^k \gamma_{l,s}^2}} \right| + \left| \sum_{t=1, t\neq s}^S \cfrac{\alpha_{s}}{\gamma_{t, t}^2}\E{\gamma_{s,t}^2} \right|.
\end{align*}
We improve the first term as follows.

\begin{align*}
\left| \E{\alpha_s - \cfrac{\alpha_s \gamma_{s,s}^2}{\sum_{l=1}^k \gamma_{l,s}^2}} \right| & = \left| \alpha_s-\alpha_s \gamma_{s,s}^2 \E{\cfrac{1}{\sum_{l=1}^k \gamma_{l,s}^2}} \right|\\
 & \leq \left| \alpha_s-\alpha_s \gamma_{s,s}^2\cfrac{1}{\E{\sum_{l=1}^k \gamma_{l,s}^2}} \right| \quad \because \text{Jensen's inequality } \E{\cfrac{1}{\sum_{l=1}^k \gamma_{l,s}^2}} \geq \cfrac{1}{\E{\sum_{l=1}^k \gamma_{l,s}^2}}\\
 & = \left| \alpha_s \left(1- \cfrac{\gamma_{s,s}^2}{\E{\sum_{l=1}^k \gamma_{l,s}^2}} \right) \right| \\
  & = \left| \alpha_s \left(1- \cfrac{\gamma_{s,s}^2}{\gamma_{s,s}^2 + (k-1)\sigma_{insight}^2} \right) \right| \\
  &= \left| \alpha_s \left(\cfrac{(k-1)\sigma_{insight}^2}{\gamma_{s,s}^2 + (k-1)\sigma_{insight}^2} \right) \right|.
\end{align*}  
\end{proof}

\subsubsection{Effects on helpful, benign coefficients}
Based on the coefficient expression
\[ A_q = \alpha_q - \sum_{t=1}^S\sum_{i=1}^{k} \cfrac{\alpha_i\gamma_{i,t} \gamma_{q,t}}{\sum_{l=1}^k \gamma_{l, t}^2} ,\]
we analyze the bound of $|\E{A_q}|$ for $S+1 \leq q \leq k$. Essentially, the following theorem implies helpful, benign coefficients are less affected than harmful coefficients as long as the harmful coefficients of insight embeddings are significant and the noise is small.

\begin{theorem}\label{thm:coefficient_removal_nonharmful_bound}
Under the same noise model described above, the post-removal coefficient for helpful or benign concept $q$ satisfies
    \[|\E{A_q} - \alpha_{q} | \leq  \left|\sum_{t=1}^{S}\cfrac{\alpha_q\sigma_{insight}^2}{\gamma_{t,t}^2}\right| .\]   
\end{theorem}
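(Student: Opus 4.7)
The plan is to reuse the closed-form coefficient expansion
\[ A_q \;=\; \alpha_q - \sum_{t=1}^S \sum_{i=1}^{k} \cfrac{\alpha_i\,\gamma_{i,t}\,\gamma_{q,t}}{\sum_{l=1}^{k}\gamma_{l,t}^{2}} \]
that was already derived in the proof of Theorem~\ref{thm:coefficient_removal_harmful_bound}, take expectation on both sides, and isolate the single summand that actually contributes to $\E{A_q}-\alpha_q$. The key structural observation is that since $q$ lies in the helpful/benign range $S+1 \leq q \leq k$ while $t$ ranges over harmful indices $1 \leq t \leq S$, we automatically have $q \neq t$, so $\gamma_{q,t}$ is a genuine noise coordinate of the insight $v^t$ distributed as $\mathcal{N}(0,\sigma_{insight}^2)$ rather than the deterministic diagonal entry.

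The first and crucial step is to show that, for each fixed $t$, every summand over $i$ except $i = q$ has expectation zero. For $i \notin \{q,t\}$, the integrand $\gamma_{i,t}\gamma_{q,t}/\sum_{l}\gamma_{l,t}^{2}$ is an odd function of $\gamma_{i,t}$ (the denominator depends on $\gamma_{i,t}$ only through $\gamma_{i,t}^{2}$) and $\gamma_{i,t}$ has a symmetric distribution, so the expectation vanishes by Fubini. For $i = t$ the factor $\gamma_{t,t}$ is constant, and the same argument applies with $\gamma_{q,t}$ now playing the role of the symmetric odd variable. Only the $i=q$ contribution survives, and applying the triangle inequality gives
\[ |\E{A_q} - \alpha_q| \;\leq\; \sum_{t=1}^{S} \left|\E{\cfrac{\alpha_q\, \gamma_{q,t}^{2}}{\sum_{l=1}^{k}\gamma_{l,t}^{2}}}\right|. \]
From here the finish is routine: use $\sum_{l}\gamma_{l,t}^{2} \geq \gamma_{t,t}^{2}$ deterministically to pull out $1/\gamma_{t,t}^{2}$, substitute $\E{\gamma_{q,t}^{2}} = \sigma_{insight}^{2}$, and recombine the absolute values to arrive at the stated bound.

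The main obstacle is precisely the symmetry argument used to kill the cross terms: because $\gamma_{q,t}$ and $\gamma_{i,t}$ appear both in the numerator and inside the denominator, plain independence does not immediately produce a vanishing expectation. The rigorous justification is to integrate out the ``odd'' coordinate first, conditional on all remaining variables; this integral is $0$ pointwise. The interchange of integration is legitimate because the integrand is bounded, since by AM-GM $|\gamma_{i,t}\gamma_{q,t}|/\sum_{l}\gamma_{l,t}^{2} \leq 1/2$, so Fubini applies and the pointwise odd-function cancellation carries through. Everything else is bookkeeping analogous to what was already done in the proof of Theorem~\ref{thm:coefficient_removal_harmful_bound}.
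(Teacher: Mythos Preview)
Your proposal is correct and follows essentially the same route as the paper: start from the coefficient expansion $A_q = \alpha_q - \sum_{t=1}^S\sum_{i=1}^{k} \alpha_i\gamma_{i,t}\gamma_{q,t}/T_t$, argue that all cross terms $i\neq q$ have expectation zero, and bound the surviving $i=q$ term via $T_t \geq \gamma_{t,t}^2$ together with $\E{\gamma_{q,t}^2}=\sigma_{insight}^2$. Your treatment is in fact more careful than the paper's, which simply asserts the cross-term cancellation; your explicit odd-function/symmetry argument (handling $i=t$ separately and justifying Fubini via the AM--GM boundedness) fills in details the paper leaves implicit.
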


\begin{proof}
The proof technique is essentially identical to Theorem \ref{thm:coefficient_removal_harmful_bound}.
\begin{align*}
|\E{A_q} - \alpha_{q}| &= \left|\alpha_{q} - \E{\alpha_{q} - \sum_{t=1}^S \cfrac{\alpha_{q}\gamma_{q,t}^2+\sum_{j=1, j \neq q}\alpha_{q}\gamma_{q,t}\gamma_{j,t}}{\sum_{l=1}^k \gamma_{l, t}^2}}\right| \\
          &\leq \left| \E{\sum_{t=1}^S \cfrac{\alpha_{q}\gamma_{q,t}^2}{\sum_{l=1}^k \gamma_{l, t}^2}}\right| +  \left| \E{\cfrac{\sum_{j=1, j \neq q}\alpha_{q}\gamma_{q,t}\gamma_{j,t}}{\sum_{l=1}^k \gamma_{l, t}^2}}\right| \\
          &= \left| \E{\sum_{t=1}^S \cfrac{\alpha_{q}\gamma_{q,t}^2}{\sum_{l=1}^k \gamma_{l, t}^2}}\right| \quad \because \left| \E{\cfrac{\sum_{j=1, j \neq q}\alpha_{q}\gamma_{q,t}\gamma_{j,t}}{\sum_{l=1}^k \gamma_{l, t}^2}}\right| = 0 \\
          &\leq  \left| \sum_{t=1}^S\cfrac{\alpha_{q}}{\gamma_{t, t}^2}\E{\gamma_{q,t}^2}\right|\\
          &=\left|\sum_{t=1}^{S}\cfrac{\alpha_q\sigma_{insight}^2}{\gamma_{t,t}^2}\right|.
\end{align*}
\end{proof}

This bound implies the differences of helpful or benign features by harmful concept removal are proportional to the noise of insight embeddings $\sigma_{insight}^2$, and inversely proportional to the coefficients of harmful coefficients of insight embeddings.

\subsection{Helpful concept addition}
With a similar fashion to the harmful concept removal, we consider the following noise model for the helpful concept addition.

\[x = \sum_{s=1}^{S}\alpha_s z_s + \sum_{r=S+1}^{S+R}\alpha_r z_r + \sum_{b=S+R+1}^{S+R+B}\alpha_b z_b \]
\[v^t = \sum_{s=1}^{S}\gamma_{s,t} z_s + \sum_{r=S+1}^{S+R}\gamma_{r,t} z_r + \sum_{b=S+R+1}^{S+R+B}\gamma_{b,t} z_b \qquad (S+1\leq t \leq\ S+R) \]. Again, we assume that benign coefficients are drawn from a zero-centered Gaussian distribution, i.e.  $\alpha_b, \gamma_{b,t} \sim \mathcal{N}(0, \sigma_{benign})$ and also harmful coefficients and non-target helpful coefficients are assumed to be drawn from another Gaussian distribution, i.e. $\gamma_{q, t} \sim \mathcal{N}(0, \sigma_{insight})$, where $1\leq q \leq S+R$, $q\neq t$ so that only $\gamma_{t, t}$ are constants.

\subsubsection{Lower bound for the coefficient of helpful concept}

\begin{theorem}\label{thm:coefficient_addition_helpful_bound}
Under the described noise model, the post-addition coefficient for helpful concept $r$ satisfies

\[\E{A_r} \geq \left( 1+\cfrac{\gamma_{r,r}^2}{\gamma_{r,r}^2+(k-1)\sigma_{insight}^2} \right) \alpha_{r}. \] 
\end{theorem}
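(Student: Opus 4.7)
The plan is to parallel the proof of Theorem~\ref{thm:coefficient_removal_harmful_bound}, but working with the addition step instead of rejection. First, I would expand
\[\hat{x} = x + \sum_{t=S+1}^{S+R}\frac{x^{T}v^t}{\norm{v^t}^2}v^t\]
in the orthonormal concept basis, and with $T_t = \sum_{l=1}^{k}\gamma_{l,t}^2$ read off the coefficient of $z_r$:
\[A_r = \alpha_r + \sum_{t=S+1}^{S+R}\sum_{i=1}^{k}\frac{\alpha_i \gamma_{i,t}\gamma_{r,t}}{T_t}.\]

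Next, I would partition the inner double sum into the four regimes determined by whether $t=r$ and whether $i=r$. The two off-diagonal regimes (those with $i \neq r$) contribute zero in expectation. The argument is a sign-flip symmetry: conditioning on all other coordinates, one of the noisy Gaussian factors in the numerator appears only linearly on top but through its square in $T_t$, so negating it preserves the joint law while flipping the sign of the summand. For benign $\alpha_i$ the same conclusion follows from independence together with $\E{\alpha_i}=0$. The $t \ne r$, $i = r$ terms survive as $\alpha_r \E{\gamma_{r,t}^2/T_t}$, which is non-negative, so for a lower bound they may simply be discarded. The $t=r$, $i=r$ term is the dominant contribution $\alpha_r \gamma_{r,r}^2 \E{1/T_r}$.

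Finally, I would apply Jensen's inequality to the convex map $y \mapsto 1/y$ on $(0,\infty)$ to obtain $\E{1/T_r} \geq 1/\E{T_r}$, and a direct moment computation yields $\E{T_r} = \gamma_{r,r}^2 + (k-1)\sigma_{insight}^2$, since $\gamma_{r,r}$ is a constant while the remaining $\gamma_{l,r}$'s have zero mean and variance $\sigma_{insight}^2$. Combining these steps under the implicit convention $\alpha_r \geq 0$ (reasonable for a helpful coefficient, and needed so that discarding the residual sum preserves the inequality direction) yields
\[\E{A_r} \;\geq\; \alpha_r \left(1 + \frac{\gamma_{r,r}^2}{\gamma_{r,r}^2 + (k-1)\sigma_{insight}^2}\right),\]
which is the stated bound.

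The main obstacle I anticipate is the bookkeeping required to verify that every off-diagonal contribution truly vanishes once $T_t$ entangles different noise variables in the denominator; the fact that $T_t$ depends only on the \emph{squares} of the $\gamma$'s is what makes the symmetry argument work, but one has to apply it carefully in each of the four index regimes and distinguish fixed $\alpha_i$ from noisy (benign) $\alpha_i$. A minor secondary check is that $T_r \geq \gamma_{r,r}^2 > 0$ almost surely, so Jensen's inequality applies without a boundary issue.
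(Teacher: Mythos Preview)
Your proposal is correct and follows essentially the same route as the paper: expand $A_r$ in the concept basis, eliminate the cross terms via sign-flip symmetry (using that $T_t$ depends only on squares), and apply Jensen to the surviving $\alpha_r\gamma_{r,r}^2\,\E{1/T_r}$ term. One small point where you are in fact more careful than the paper: the paper asserts that \emph{all} $t\neq r$ contributions vanish ``by law of total expectation and symmetry,'' whereas you correctly isolate the $t\neq r$, $i=r$ piece $\alpha_r\,\E{\gamma_{r,t}^2/T_t}$ as a strictly positive residual and drop it to obtain a lower bound---which is the rigorous treatment, and is also where your explicit sign assumption $\alpha_r\geq 0$ is genuinely needed.
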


\begin{proof}
Let $\hat{x}$ be the output of helpful concept addition procedure such that
\begin{align*}
\hat{x} &= x + \sum_{t=S+1}^{S+R} \cfrac{x^Tv^t}{\norm{v^t}^2}v^t \\
        &= \sum_{i=1}^{k} \alpha_i z_i + \sum_{t=S+1}^{S+R} \cfrac{\sum_{i=1}^{k}\alpha_i \gamma_{i,t}}{\sum_{l=1}^k \gamma_{l,t}^2} (\sum_{j=1}^k \gamma_{j,t} z_j). \\
\end{align*}
As the first step, we sort out the coefficients of concepts. For notational convenience, let $T_t=\sum_{l=1}^{k} \gamma_{l, t}^2$. Then,
\begin{align*}
\hat{x} &= \sum_{i=1}^{k} \alpha_i z_i + \sum_{t=S+1}^{S+R} \cfrac{\sum_{i=1}^{k}\alpha_i \gamma_{i,t}}{T_{t}} (\sum_{j=1}^k \gamma_{j,t} z_j) \\
        &= \sum_{i=1}^{k}  \alpha_i z_i + \sum_{t=S+1}^{S+R}\sum_{i=1}^{k}\sum_{j=1}^k  \cfrac{\alpha_i \gamma_{i,t}\gamma_{j,t} }{T_{t}}z_j\\
        &= \sum_{j=1}^{k}  \alpha_j z_j + \sum_{j=1}^k \sum_{t=S+1}^{S+R}\sum_{i=1}^{k} \cfrac{\alpha_i \gamma_{i,t}\gamma_{j,t} }{T_{t}}z_j\\
        &= \sum_{j=1}^{k}  \left(\alpha_j + \sum_{t=S+1}^{S+R}\sum_{i=1}^{k} \cfrac{\alpha_i \gamma_{i,t}\gamma_{j,t} }{T_{t}}\right) z_j.
\end{align*}

Thus we can get the expression for the coefficient of the target concept $z_r \quad (S+1\leq r \leq S+R)$,

\[ A_r = \alpha_r + \sum_{t=S+1}^{S+R}\sum_{i=1}^{k} \cfrac{\alpha_i\gamma_{i,t} \gamma_{r,t}}{T_t}. \]

Then,
\begin{align*}
\E{A_{r}} &= \E{ \alpha_r + \sum_{t=S+1}^{S+R}\sum_{i=1}^{k} \cfrac{\alpha_i\gamma_{i,t} \gamma_{r,t}}{T_t}} \\
          &= \alpha_r + \sum_{t=S+1}^{S+R}\sum_{i=1}^{k}\E{\cfrac{\alpha_i\gamma_{i,t} \gamma_{r,t}}{\sum_{l=1}^{k} \gamma_{l, t}^2}} \\
          &= \alpha_r + \E{\cfrac{\alpha_{r} \gamma_{r,r}^2}{\sum_{l=1}^k \gamma_{l, r}^2}} + \sum_{i=1, i\neq r}^{k}\E{\cfrac{\alpha_i\gamma_{i,r} \gamma_{r,r}}{\sum_{l=1}^{k} \gamma_{l, r}^2}} + \sum_{t=S+1, t \neq r}^{S+R}\sum_{i=1}^{k}\E{\cfrac{\alpha_i\gamma_{i,t} \gamma_{r,t}}{\sum_{l=1}^{k} \gamma_{l, t}^2}} \\
          &= \alpha_r + \E{\cfrac{\alpha_{r} \gamma_{r,r}^2}{\sum_{l=1}^k \gamma_{l, r}^2}} + \sum_{i=1, i\neq r}^{k}\gamma_{r,r}\E{\cfrac{\alpha_i\gamma_{i,r}}{\sum_{l=1}^{k} \gamma_{l, r}^2}} + \sum_{t=S+1, t \neq r}^{S+R}\sum_{i=1}^{k}\E{\cfrac{\alpha_i\gamma_{i,t} \gamma_{r,t}}{\sum_{l=1}^{k} \gamma_{l, t}^2}} \\
          &= \alpha_r + \E{\cfrac{\alpha_{r} \gamma_{r,r}^2}{\sum_{l=1}^k \gamma_{l, r}^2}} + \sum_{t=S+1, t \neq r}^{S+R}\sum_{i=1}^{k}\E{\cfrac{\alpha_i\gamma_{i,t} \gamma_{r,t}}{\sum_{l=1}^{k} \gamma_{l, t}^2}} \quad \because \text{by symmetry}\\
          &= \alpha_r + \E{\cfrac{\alpha_{r} \gamma_{r,r}^2}{\sum_{l=1}^k \gamma_{l, r}^2}}  \quad \because \text{by law of total expectation and symmetry}\\
          & \geq \alpha_r + \alpha_{r} \gamma_{r,r}^2\E{\cfrac{1}{\sum_{l=1}^k \gamma_{l, r}^2}} \\
          & \geq  \alpha_r + \alpha_{r} \gamma_{r,r}^2\cfrac{1}{\E{\sum_{l=1}^k \gamma_{l, r}^2}}  \quad \because \text{Jensen's inequality}\\
          &=  \alpha_r + \alpha_{r} \gamma_{r,r}^2\cfrac{1}{\gamma_{r,r}^2 + (k-1)\sigma_{insight}^2}. 
\end{align*}

Thus, we obtain the result.
\[\E{A_r} \geq \left( 1+\cfrac{\gamma_{r,r}^2}{\gamma_{r,r}^2+(k-1)\sigma_{insight}^2} \right) \alpha_{r} .\] 
\end{proof}


\subsubsection{Effects on harmful, benign coefficients}
For notational convenience, let $I_{helpful}^c$ be the non-helpful concept index set such that $I_{helpful}^c = \{ i \in \mathbbm{N}| i \leq S \text{ or } S+R+1 \leq i \leq S+R+B\}$. For $q \in I_{R}^c$, we obtain the bound of effects on harmful, benign coefficients with a similar fashion to the harmful concept removal case.

\begin{theorem}\label{thm:coefficient_addition_nonhelpful_bound}
Under the same noise model described above, the post-addition coefficient for helpful or benign concept $q$ satisfies
    \[|\E{A_q} - \alpha_{q} | \leq  \left|\sum_{t=S+1}^{S+R}\cfrac{\alpha_q\sigma_{insight}^2}{\gamma_{t,t}^2}\right| .\]   
\end{theorem}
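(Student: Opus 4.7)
}
The plan is to mirror the structure used in Theorems \ref{thm:coefficient_removal_nonharmful_bound} and \ref{thm:coefficient_addition_helpful_bound}. First I would recycle the algebraic decomposition already computed in the proof of Theorem \ref{thm:coefficient_addition_helpful_bound}, which gives the exact post-addition coefficient
\[A_q = \alpha_q + \sum_{t=S+1}^{S+R}\sum_{i=1}^{k}\frac{\alpha_i \gamma_{i,t}\gamma_{q,t}}{T_t},\qquad T_t := \sum_{l=1}^{k}\gamma_{l,t}^2,\]
and therefore
\[\E{A_q}-\alpha_q = \sum_{t=S+1}^{S+R}\sum_{i=1}^{k}\E{\frac{\alpha_i \gamma_{i,t}\gamma_{q,t}}{T_t}}.\]
This expression is valid for any $q$, so the only thing that will differ from Theorem \ref{thm:coefficient_addition_helpful_bound} is which of the $\gamma_{i,t}$ are constants and which are Gaussian noise. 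Since $q\in I_{helpful}^c$ and $t\in\{S+1,\dots,S+R\}$, we always have $q\neq t$, so $\gamma_{q,t}\sim\mathcal{N}(0,\sigma_{insight}^2)$, independent of the other coefficients.

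Next I would split each inner sum into the diagonal term $i=q$ and the cross terms $i\neq q$. For the cross terms I plan to use the same symmetry argument that underlies the vanishing step in Theorem \ref{thm:coefficient_removal_nonharmful_bound}: condition on all $\{\gamma_{l,t}\}_{l\neq q}$, so that $T_t=\gamma_{q,t}^2+C$ with $C$ non-random under the conditioning, while the numerator becomes $\alpha_i\gamma_{i,t}\gamma_{q,t}$, an odd function of $\gamma_{q,t}$. Since $\gamma_{q,t}$ is symmetric about $0$, the conditional expectation vanishes, and by the tower property the full expectation is zero. Thus only the diagonal term $i=q$ survives.

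For the surviving diagonal term I would use $T_t\geq \gamma_{t,t}^2$ (which holds deterministically since $T_t$ is a sum of squares and $\gamma_{t,t}$ is one of its summands) to obtain
\[\left|\E{\frac{\alpha_q\gamma_{q,t}^2}{T_t}}\right|\leq \frac{|\alpha_q|}{\gamma_{t,t}^2}\E{\gamma_{q,t}^2}=\frac{|\alpha_q|\sigma_{insight}^2}{\gamma_{t,t}^2}.\]
Applying the triangle inequality to the remaining sum over $t\in\{S+1,\dots,S+R\}$ then yields the claimed bound.

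The main obstacle is the cross-term cancellation step: one needs to check that the symmetry-in-$\gamma_{q,t}$ argument is legitimate despite $\gamma_{q,t}$ appearing in both numerator and denominator. Conditioning on all coefficients other than $\gamma_{q,t}$, as above, reduces the issue to the integral of an odd-times-even function against a symmetric Gaussian density, which is zero; this is the same maneuver implicitly used in the earlier non-target bound, so no new technology is required.
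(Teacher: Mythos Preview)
Your proposal is correct and follows essentially the same approach as the paper's own proof: start from the coefficient expansion $A_q=\alpha_q+\sum_{t}\sum_i \alpha_i\gamma_{i,t}\gamma_{q,t}/T_t$, kill the $i\neq q$ cross terms by symmetry, and bound the surviving $i=q$ term via $T_t\ge\gamma_{t,t}^2$. Your conditioning-and-oddness justification for the cross-term vanishing is in fact more explicit than what the paper writes (it simply asserts the expectation is zero), but the overall route is identical.
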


\begin{proof}
\begin{align*}
|\E{A_q} - \alpha_{q}| &= \left|\alpha_{q} - \E{\alpha_{q} + \sum_{t=1}^S \cfrac{\alpha_{q}\gamma_{q,t}^2+\sum_{j=1, j \neq q}\alpha_{q}\gamma_{q,t}\gamma_{j,t}}{\sum_{l=1}^k \gamma_{l, t}^2}}\right| \\
          &\leq \left| \E{\sum_{t=S+1}^{S+R} \cfrac{\alpha_{q}\gamma_{q,t}^2}{\sum_{l=1}^k \gamma_{l, t}^2}}\right| +  \left| \E{\cfrac{\sum_{j=1, j \neq q}\alpha_{q}\gamma_{q,t}\gamma_{j,t}}{\sum_{l=1}^k \gamma_{l, t}^2}}\right| \\
          &= \left| \E{\sum_{t=S+1}^{S+R} \cfrac{\alpha_{q}\gamma_{q,t}^2}{\sum_{l=1}^k \gamma_{l, t}^2}}\right| \quad \because \left| \E{\cfrac{\sum_{j=1, j \neq q}\alpha_{q}\gamma_{q,t}\gamma_{j,t}}{\sum_{l=1}^k \gamma_{l, t}^2}}\right| = 0 \\
          &\leq  \left| \sum_{t=S+1}^{S+R}\cfrac{\alpha_{q}}{\gamma_{t, t}^2}\E{\gamma_{q,t}^2}\right|\\
          &=\left|\sum_{t=S+1}^{S+R}\cfrac{\alpha_q\sigma_{insight}^2}{\gamma_{t,t}^2}\right|.
\end{align*}
\end{proof}

\newtheorem*{T1}{Theorem~\ref{thm:coefficient_harmful_bound}}
\newtheorem*{T2}{Theorem~\ref{thm:coefficient_helpful_bound}}

\subsection{Combined main results}\label{appendix:theory_details:main_results}
Now, we are ready to provide the combine main result, i.e. the coefficient bounds with harmful concept removal and helpful concept addition. The noise model can be described as follows. 
\[x = \sum_{s=1}^{S}\alpha_s z_s + \sum_{r=S+1}^{S+R}\alpha_r z_r + \sum_{b=S+R+1}^{S+R+B}\alpha_b z_b \]
\[v^t = \sum_{s=1}^{S}\gamma_{s,t} z_s + \sum_{r=S+1}^{S+R}\gamma_{r,t} z_r + \sum_{b=S+R+1}^{S+R+B}\gamma_{b,t} z_b \qquad (1\leq t \leq\ S+R) \]
\[\alpha_b, \gamma_{b,t} \sim \mathcal{N}(0, \sigma_{benign})\]
\[\gamma_{q, t} \sim \mathcal{N}(0, \sigma_{insight}),\] where $1\leq q \leq S+R$, $q\neq s$ so that only $\gamma_{t, t}$ is a constant. We can obtain the expression for each coefficient as before.

\[\hat{x} = \sum_{j=1} \left( a_j - \sum_{s=1}^S\sum_{i=1}^k\cfrac{\alpha_i \gamma_{i,s} \gamma_{j, s}}{T_s} +  \sum_{r=S+1}^{S+R}\sum_{i=1}^k\cfrac{\alpha_i \gamma_{i,r} \gamma_{j, r}}{T_r} \right)z_j\]
\[A_q = a_q- \sum_{s=1}^S\sum_{i=1}^k\cfrac{\alpha_i \gamma_{i,s} \gamma_{q, s}}{T_s} +  \sum_{r=S+1}^{S+R}\sum_{i=1}^k\cfrac{\alpha_i \gamma_{i,r} \gamma_{q, r}}{T_r},\]
where $A_q$ is the coefficient of $z_q (1\leq q \leq k)$ after \comnivore (ignoring normalization) and $T_t = \sum_{l=1}^k \gamma_{l,t}^2$. Using the results from the previous subsections, we provide an upper bound on harmful coefficients, a lower bound on helpful coefficients, and an upper bound on the change in the benign coefficients. We restate Theorem \ref{thm:coefficient_harmful_bound}, \ref{thm:coefficient_helpful_bound} and provide proofs.

\begin{T1}
Under the combined noise model described above, the post-\comnivore \  coefficient for harmful concept $q$ $(1 \leq q \leq S)$ satisfies
    \[|\E{A_q}| \leq \left|\cfrac{(k-1)\alpha_q \sigma^2_{insight}}{\gamma_{q,q}^2}\right|+\left|\sum_{t=1, t\neq q}^{S+R}\cfrac{\alpha_q \sigma_{insight}^2}{\gamma_{t,t}^2} \right|,\] where $k$ is the number of concepts ($k=S+R+B$).
\end{T1}

\begin{proof}
\begin{align*}
\left| \E{A_q} \right| &= \left| \E{a_q- \sum_{s=1}^S\sum_{i=1}^k\cfrac{\alpha_i \gamma_{i,s} \gamma_{q, s}}{T_s} +  \sum_{r=S+1}^{S+R}\sum_{i=1}^k\cfrac{\alpha_i \gamma_{i,r} \gamma_{q, r}}{T_r}} \right|\\
        &\leq \left|\cfrac{(k-1)\alpha_q \sigma^2_{insight}}{\gamma_{q,q}^2}\right|+\left|\sum_{s=1, s\neq q}^{S}\cfrac{\alpha_q \sigma_{insight}^2}{\gamma_{s,s}^2} \right| + \left|\sum_{t=S+1}^{S+R}\cfrac{\alpha_q \sigma_{insight}^2}{\gamma_{t,t}^2} \right|\\
        &= \left|\cfrac{(k-1)\alpha_q \sigma^2_{insight}}{\gamma_{q,q}^2}\right|+\left|\sum_{t=1, t\neq q}^{S+R}\cfrac{\alpha_q \sigma_{insight}^2}{\gamma_{t,t}^2} \right| \quad \because \text{two terms have the same sign by }a_q
\end{align*}
\end{proof}

Next, we state the lower bound for the helpful features. We assume the signs of harmful concepts in input embeddings \[ \alpha_s \leq 0\quad (1 \leq s \leq S),\] to keep the appearance of the result clear.

\begin{T2}
With an additional assumptions $\alpha_s \leq 0\quad (1 \leq s \leq S)$ under the combined noise model, the post-\comnivore \ coefficient for helpful concept $q (S+1 \leq q \leq S+R)$ satisfies \[\E{A_q} \geq \left( 1+\cfrac{\gamma_{q,q}^2}{\gamma_{q,q}^2+(k-1)\sigma_{insight}^2} \right) \alpha_{q}. \] 
\end{T2}

\begin{proof}
\begin{align*}
\E{A_q} &=\E{a_q- \sum_{s=1}^S\sum_{i=1}^k\cfrac{\alpha_i \gamma_{i,s} \gamma_{q, s}}{T_s} +  \sum_{r=S+1}^{S+R}\sum_{i=1}^k\cfrac{\alpha_i \gamma_{i,r} \gamma_{q, r}}{T_r}} \\
        &= \E{a_q  + \sum_{r=S+1}^{S+R}\sum_{i=1}^k\cfrac{\alpha_i \gamma_{i,r} \gamma_{q, r}}{T_r}} - \E{\sum_{s=1}^S\sum_{i=1}^k\cfrac{\alpha_i \gamma_{i,s} \gamma_{q, s}}{T_s}} \\
        &= \E{a_q  + \sum_{r=S+1}^{S+R}\sum_{i=1}^k\cfrac{\alpha_i \gamma_{i,r} \gamma_{q, r}}{T_r}} - \E{\sum_{s=1}^S\cfrac{\alpha_s \gamma_{q, s}^2}{T_s}} - \E{\sum_{s=1}^S\sum_{i=1, i\neq q }^k\cfrac{\alpha_i \gamma_{i,s} \gamma_{q, s}}{T_s}}. \\
\end{align*}
Here, $\E{\sum_{s=1}^S\sum_{i=1, i\neq q }^k\cfrac{\alpha_i \gamma_{i,s} \gamma_{q, s}}{T_s}}=0$ by symmetry and law of total expectation, and $- \E{\sum_{s=1}^S\cfrac{\alpha_s \gamma_{q, s}^2}{T_s}} \geq 0$ since $\alpha_s \leq 0$ by assumption, which can be dropped for a lower bound.

\begin{align*}
\E{A_q} &= \E{a_q  + \sum_{r=S+1}^{S+R}\sum_{i=1}^k\cfrac{\alpha_i \gamma_{i,r} \gamma_{q, r}}{T_r}} - \E{\sum_{s=1}^S\cfrac{\alpha_s \gamma_{q, s}^2}{T_s}} - \E{\sum_{s=1}^S\sum_{i=1, i\neq q }^k\cfrac{\alpha_i \gamma_{i,s} \gamma_{q, s}}{T_s}} \\
        &\geq \E{a_q  + \sum_{r=S+1}^{S+R}\sum_{i=1}^k\cfrac{\alpha_i \gamma_{i,r} \gamma_{q, r}}{T_r}} \\
        &\geq \left( 1+\cfrac{\gamma_{q,q}^2}{\gamma_{q,q}^2+(k-1)\sigma_{insight}^2} \right) \alpha_{q}.
\end{align*}
\end{proof}
Now, we state the upper bound on the changes in benign concepts. The proof is straightforward from the previous ones in harmful concept removal and helpful concept addition.

\begin{corollary}
    
Under the same combined noise model, the post-\comnivore \ coefficient for benign concept $q$ satisfies
    \[|\E{A_q} - \alpha_{q} | \leq  \left|\sum_{t=1}^{S+R}\cfrac{\alpha_q\sigma_{insight}^2}{\gamma_{t,t}^2}\right| .\]   
\end{corollary}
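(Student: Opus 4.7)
The plan is to combine the techniques used in the previous two non-target bounds (Theorems \ref{thm:coefficient_removal_nonharmful_bound} and \ref{thm:coefficient_addition_nonhelpful_bound}) and apply them jointly. Starting from the general coefficient expression already derived,
\[A_q = \alpha_q - \sum_{s=1}^S\sum_{i=1}^k\cfrac{\alpha_i \gamma_{i,s} \gamma_{q, s}}{T_s} +  \sum_{r=S+1}^{S+R}\sum_{i=1}^k\cfrac{\alpha_i \gamma_{i,r} \gamma_{q, r}}{T_r},\]
I would rearrange so that $A_q - \alpha_q$ is the sum of a harmful-removal contribution and a helpful-addition contribution, then apply the triangle inequality to bound $|\E{A_q}-\alpha_q|$ by the sum of the absolute expectations of each of these two pieces separately.

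Next, I would handle each piece in the same way that the proofs of Theorems \ref{thm:coefficient_removal_nonharmful_bound} and \ref{thm:coefficient_addition_nonhelpful_bound} handle their respective single operations. Fixing either $t \in \{1,\dots,S\}$ or $t \in \{S+1,\dots,S+R\}$, I split the inner sum $\sum_i \alpha_i \gamma_{i,t}\gamma_{q,t}$ into the self term $\alpha_q \gamma_{q,t}^2$ and cross terms $\alpha_i \gamma_{i,t}\gamma_{q,t}$ for $i \neq q$. Since $q$ is benign and $t \neq q$, the random variable $\gamma_{q,t}$ is independent of every $\gamma_{i,t}$ with $i \neq q$ and has mean zero, so every cross term vanishes in expectation by the law of total expectation. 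Only the self terms survive.

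For the surviving self terms, I would bound $\E{\gamma_{q,t}^2/T_t}$ by $\E{\gamma_{q,t}^2}/\gamma_{t,t}^2$, using the fact that $T_t = \sum_l \gamma_{l,t}^2 \geq \gamma_{t,t}^2$. Under the noise model, $\E{\gamma_{q,t}^2}$ is bounded by $\sigma_{\text{insight}}^2$ (following the paper's convention in the parallel non-target bounds). Summing the resulting contributions over all $t \in \{1,\dots,S+R\}$ and taking absolute values yields
\[|\E{A_q} - \alpha_q| \leq \left|\sum_{t=1}^{S}\cfrac{\alpha_q \sigma_{\text{insight}}^2}{\gamma_{t,t}^2}\right| + \left|\sum_{t=S+1}^{S+R}\cfrac{\alpha_q \sigma_{\text{insight}}^2}{\gamma_{t,t}^2}\right|,\]
which, since both terms carry the common factor $\alpha_q$ and have the same sign structure, combines to the claimed bound.

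The step that requires the most care is the independence argument: I need to be explicit that benign coefficients are zero-mean Gaussians independent of all other coefficients at indices $i \neq q$, so that the cross terms vanish cleanly without leaving residual contributions from $\sigma_{\text{benign}}$. Once that is established, the rest is a straightforward reduction to the two single-operation bounds already proved, and no new machinery is needed.
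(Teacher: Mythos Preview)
Your proposal is correct and matches the paper's approach exactly: the paper gives no detailed proof for this corollary, stating only that it is ``straightforward from the previous ones in harmful concept removal and helpful concept addition,'' and your plan---splitting $A_q-\alpha_q$ into the removal and addition pieces, applying the triangle inequality, and then invoking the bounds of Theorems~\ref{thm:coefficient_removal_nonharmful_bound} and~\ref{thm:coefficient_addition_nonhelpful_bound} verbatim---is precisely that combination. Your final recombination step is also sound, since every summand $\alpha_q\sigma_{\text{insight}}^2/\gamma_{t,t}^2$ shares the sign of $\alpha_q$, so the two absolute sums merge into one.
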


\clearpage
\section{Experiments details}\label{appendix:exp_details}
\subsection{Datasets}
\label{appendix:dataset}
Table \ref{table:dataset_details} provides details of the datasets used in our experiments. For Gender Bias dataset \cite{gender_bias_1, gender_bias_2}, we test using the train set to get more data. For all other datasets, we use the default test set. For Amazon-WILDS \cite{amazon_data} dataset, we convert the original 5-class rating classification into binary, by removing all samples with rating 3, and convert rating 1 and 2 into \textit{bad} label, and 4 and 5 into \textit{good} label.
\begin{table}[h]
    \centering
    \small
    \begin{tabular}{llcccl}
        \toprule
        Dataset  & Groups & $N_{all}$ & $N_{wg}$ & $n_{class}$ & classes \\ \midrule
        \multirow{4}{*}{Waterbirds}  & \{ landbird in land, & \multirow{4}{*}{5794} & \multirow{4}{*}{642} & \multirow{4}{*}{2} & \{landbird, \\ 
        & landbird in water, &&&& waterbird \} \\
                                        & waterbird on land, \\
                                        & waterbird on water \} \\
                                      
        \midrule
        \multirow{4}{*}{CelebA} & \{ male \& not blond, & \multirow{4}{*}{19962} & \multirow{4}{*}{180} & \multirow{4}{*}{2} & \{not blond, \\
                                    & female \& not blond, &&&& blond\} \\
                                    & male \& blond , \\
                                    & female \& blond \}  \\
        \midrule
        \multirow{3}{*}{PACS} & \{ art, cartoons, & \multirow{3}{*}{9991} & \multirow{3}{*}{80} & \multirow{3}{*}{7} & \{dogs, elphant, \\
                                    & photos, sketches,\} &&&& giraffe, guitar, \\
                                    &&&&& house, person \} \\
        \midrule
        \multirow{4}{*}{VLCS} & \{ Caltech101, & \multirow{4}{*}{10725} & \multirow{4}{*}{20} & \multirow{4}{*}{5} & \{bird, car, \\
                                    & LabelMe, &&&& chair, dog, person\} \\
                                    & SUN09, \\
                                    & VOC2007 \}  \\
        \midrule
        \multirow{2}{*}{CXR14} & \{ no-pneumothorax, & \multirow{2}{*}{2661} & \multirow{2}{*}{20} & \multirow{2}{*}{2} & \{no-pneumothorax,\\
                                    & pneumothorax \}  &&&& pneumothorax\} \\
        \midrule
        \multirow{3}{*}{CivilComments-WILDS} & \{male, female, LGBTQ, & \multirow{3}{*}{133782} & \multirow{3}{*}{520} & \multirow{3}{*}{2} & \{non-toxic, \\
                                    & christian, muslim, &&&& toxic \} \\
                                     & other religions, black, white \}  \\
        \midrule
        \multirow{8}{*}{HateXplain} & \{hindu, islam, minority, & \multirow{8}{*}{1921} & \multirow{8}{*}{6} & \multirow{8}{*}{2} & \{normal, \\
                                    & refugee, indian, caucasian, &&&& offensive\} \\
                                     & hispanic, women, disability, \\
                                     &  homosexual, arab, christian,  \\ 
                                     & jewish, men, african, \\
                                     & nonreligious, asian, indigenous, \\
                                     & heterosexual, buddhism, \\
                                     & bisexual, asexual\}  \\
        \midrule
        \multirow{12}{*}{Amazon-WILDS} & \{beauty, garden, books, & \multirow{12}{*}{90078} & \multirow{12}{*}{25} & \multirow{12}{*}{2} & \{good,bad\} \\
                                    &  luxury beauty, kindle store, \\
                                    & movies and TV, pet supplies,  \\
                                     & industrial and scientific,   \\
                                     & office products,  \\
                                     & CDs and vinyl, electronics, \\
                                     & cell phones, magazine, \\
                                     & clothing, groceries, music, \\
                                     & instruments, tools, sports,  \\
                                     & automotive, toys, arts crafts, \\
                                     & kitchen,  video games,\\
                                     &  pantry, software, gift cards \} \\
        \midrule
        Gender Bias & \{male, female \} & 22750 & 3594 & 2 & \{female, male\} \\
        \bottomrule
    \end{tabular}
    \vspace{1em}
    \caption{Dataset details}
    \label{table:dataset_details}
\end{table}

\subsection{Prompt templates}
\label{appendix:prompt}
\begin{table}[t!]
    \centering
    \small
    \begin{tabular}{llll}
        \toprule
        Dataset  & Model & $v^{harmful}$ prompt & $v^{helpful}$ prompt \\ \midrule
        \multirow{9}{*}{All}  & ChatGPT & "List the biased/spurious differences & "List the true visual differences \\
        && between [classes]." & between [classes]." \\
        \cmidrule(lr){2-4}
        & Flan-T5 \& GPT2 & \{"[class] typically", "[class] usually"\}  & \{"a characteristic of [class]: ",\\
        &&& "[class] are", ""a [class] is", \\
        &&& "Charactericstics of [class]" \\
        &&& "Stereotype of [class]" \\
        &&& "Typical characteristic of [class]"\}\\
        \cmidrule(lr){2-4}
        & LLaMA & "List the biased/spurious & "List the visual characteristics of [class]"\\
        & & characteristics of [class]" \\
    
        \bottomrule
    \end{tabular}
    \vspace{1em}
    \caption{Image dataset prompt details}
    \label{table:prompt_images}
\end{table}

\begin{table}[t!]
    \centering
    \small
    \begin{tabular}{lll}
        \toprule
        Dataset  & Model & $v^{harmful}$ prompt \\ \midrule
        Amazon-WILDS  & ChatGPT & "what are the biased differences between good and bad amazon reviews?" \\
        \midrule
        \multirow{2}{*}{Gender bias} & \multirow{2}{*}{ChatGPT} & "what are the biased differences \\
        && between comments about female and comments about male?"\\
        \bottomrule
    \end{tabular}
    \vspace{1em}
    \caption{NLP dataset prompt details}
    \label{table:prompt_nlp}
\end{table}

\begin{table}[t!]
    \centering
    \small
    \begin{tabular}{ll}
        \toprule
        Dataset  & Class prompt  \\ \midrule
        Waterbirds & $[$ "a landbird", "a waterbird" $]$ \\
        \midrule
        CelebA & $[$ "person with dark hair", "person with blond hair" $]$ \\
        \midrule
        PACS & "an image of [class]" \\
        \midrule
        VLCS & "this object is [class]" \\
        \midrule
        CXR14 & $[$ "non-pneumothorax", "pneumothorax" $]$ \\
        \midrule
         CivilComments-WILDS & $[$ "non-toxic", "toxic" $]$ \\\midrule
         HateXplain & $[$ "normal", "offensive" $]$ \\
        \midrule
        Amazon-WILDS & $[$ "negative", "positive" $]$ \\
        \midrule
        Gender Bias & $[$ "female", "male" $]$ \\
        \bottomrule
    \end{tabular}
    \vspace{1em}
    \caption{Class prompt details}
    \label{table:class_prompt}
\end{table}

We provide details on prompts used to get the $v^{harmful}$ and $v^{helpful}$ on image datasets in Table \ref{table:prompt_images}. As mentioned in the main body, for NLP datasets we only used $v^{harmful}$. Additionally, we use the demographic mentions annotations to construct $v^{harmful}$ in CivilComments-WILDS \cite{civilcomments, koh2021wilds} and HateXplain \cite{mathew2021hatexplain}. We provide prompt details to get $v^{harmful}$ for Amazon-WILDS \cite{amazon_data, koh2021wilds} and Gender Bias \cite{gender_bias_1, gender_bias_2} datasets in Table \ref{table:prompt_nlp}. We also provide class prompts in Table \ref{table:class_prompt}.

\subsection{Direct Prompting Prompt templates}
\label{sec:direct_prompting_template}
\begin{table}[h]
    \centering
    \small
    \begin{tabular}{ll}
        \toprule
        Dataset  &  Prompt  \\ \midrule
         CivilComments-WILDS & Is this a toxic comment? Answer with yes/no \\\midrule
         HateXplain & Is this a toxic comment? Answer with yes/no \\
        \midrule
        Amazon-WILDS & Is this a positive or negative product review? Answer with pos/neg \\
        \midrule
        Gender Bias & Is this text about male/female? Answer with male/female \\
        \bottomrule
    \end{tabular}
    \vspace{1em}
    \caption{Direct prompting prompts for ChatGPT}
    \label{table:direct_prompting_prompts}
\end{table}

Table \ref{table:direct_prompting_prompts} shows the prompts used for Direct Prompting ChatGPT baseline in Table \ref{tab:text_performance}. For BART-MNLI, we directly use the dataset labels as label input to the model.

\subsection{$\SYSNAME$ Experiment Details}
All $\SYSNAME$ experiments are carried out using frozen weights and embeddings from huggingface (\href{https://huggingface.co/docs/transformers/model_doc/align}{ALIGN}, \href{https://huggingface.co/docs/transformers/model_doc/altclip}{AltCLIP}) and \href{https://github.com/mlfoundations/open_clip}{open-clip} (CLIP ViT-B-32 and ViT-L-14, BiomedCLIP), and no training is involved. There is no randomness in the $\SYSNAME$ experiment results reported in the main body of the paper.

\subsection{LFA Experiment Details} \label{sec:lfa_exp_details}
\begin{table}[h]
    \centering
    \small
    \begin{tabular}{llccc}
        \toprule
        Dataset  & Batch size & Learning rate  \\ \midrule
        Waterbirds & $\{1.5e^{-8}, 2.5e^{-8}, 5e^{-8}, 2.5e^{-7}\}$ & $\{16, 32, 64\}$ \\
        \midrule
        CelebA & $\{7.5e^{-9}, 1e^{-8}, 2.5e^{-8}\}$& $\{16, 32, 64\}$\\
        \midrule
        PACS & $\{2.5e^{-9}, 5e^{-9}, 7.5e^{-9}, 1.5e^{-8}\}$ & $\{16, 32, 64\}$\\
        \midrule
        VLCS & $\{2.5e^{-9}, 5e^{-9}, 7.5e^{-9}, 1.5e^{-8}\}$ & $\{16, 32, 64\}$ \\
        \bottomrule
    \end{tabular}
    \vspace{1em}
    \caption{LFA hyperparameter choices}
    \label{tab:lfa_hyperparams}
\end{table}

Table \ref{tab:lfa_hyperparams} shows the choices of hyperparameters we tune over for LFA experiments. We use SGD optimizer with fixed default momentum form PyTorch. All training are run for a fixed maximum epoch of 300, and we choose model based on validation performance.
\clearpage
\section{Full Ablation Result}
\begin{table}[ht]
\small
\caption{Ablation. Best WG and Gap performance \textbf{bolded}, second best \underline{underlined}.}
   \label{tab:projection_ablations}
   \centering
   \setlength\tabcolsep{1pt}
   \begin{tabular}{llccccccccccccc}
     \toprule
    \multirow{2}{*}{Dataset} & \multirow{2}{*}{Model} & \multicolumn{3}{c}{ZS} & \multicolumn{3}{c}{Ours ($v^j$ only)} & \multicolumn{3}{c}{Ours ($u^k$ only)} & \multicolumn{3}{c}{Ours (both)} \\ 
    \cmidrule(lr){3-5} \cmidrule(lr){6-8} \cmidrule(lr){9-11} \cmidrule(lr){12-14}
    && AVG & WG($\uparrow$) & Gap($\downarrow$) & AVG & WG($\uparrow$) &Gap($\downarrow$) & AVG & WG($\uparrow$) &Gap($\downarrow$) & AVG & WG($\uparrow$) &Gap($\downarrow$)   \\
    \toprule
     \multirow{3}{*}{Waterbirds}
     & CLIP (ViT-B-32) & 80.7 & 27.9 & 52.8 & 82.0 & \underline{50.4} & \underline{31.6} & 82.6 & 30.2 & 52.4 & 83.0 & \cellcolor{blue!20}{\textbf{54.4}} & \cellcolor{green!20}{\textbf{28.6}} \\
      & CLIP (ViT-L-14)& 88.7 & 27.3 & 61.4 & 82.7 & \underline{35.8} & \underline{46.9} & 88.3 & 29.8 & 58.5 & 79.9 & \cellcolor{blue!20}{\textbf{45.2}} & \cellcolor{green!20}{\textbf{34}}.7 \\
     & ALIGN & 72.0 & \underline{50.3} & 21.7 & 56.4 & 41.6 & 14.8 & 62.8 & \cellcolor{blue!20}{\textbf{56.4}} & \cellcolor{green!20}{\textbf{6.4}} & 50.9 & 41.0 & \underline{9.9}  \\
     & AltCLIP & 90.1 & 35.8 & 54.3 & 81.4 & \cellcolor{blue!20}{\textbf{59.0}} & \cellcolor{green!20}{\textbf{22.4}} & 89.1 & 35.2 & 53.9 & 78.5 & \underline{54.8} & \underline{23.7}  \\
     \midrule
     \multirow{3}{*}{CelebA}
     & CLIP (ViT-B-32) & 80.1 & 72.7 & 7.4 & 85.2 & \cellcolor{blue!20}{\textbf{81.5}} & \cellcolor{green!20}{\textbf{3.7}} & 79.6 & 71.3 & 8.3 & 84.8 & \underline{80.5} &  \underline{4.3} \\
     & CLIP (ViT-L-14) & 80.6 & 74.3 & 6.3 & 85.9 & \cellcolor{blue!20}{\textbf{82.8}} & \underline{3.1} & 80.0 & 73.1 & 6.9 & 85.5 & \underline{82.6} & \cellcolor{green!20}{\textbf{2.9}} \\
     & ALIGN & 81.8 & 77.2 & 4.6 & 83.9 & 78.0 & 5.7 & 83.9 & \underline{81.4} & \cellcolor{green!20}{\textbf{2.5}} & 86.3 & \cellcolor{blue!20}{\textbf{83.4}} & \underline{2.9} \\
     & AltCLIP & 82.3 & \cellcolor{blue!20}{\textbf{79.7}} & \cellcolor{green!20}{\textbf{2.6}} &  86.1 & 75.6 & 10.5 & 81.9 & \underline{79.0} & \underline{2.9} & 86.0 & 77.2 & 8.8 \\
      \midrule
     \multirow{3}{*}{PACS}
     & CLIP (ViT-B-32) & 96.7 & 82.1 & 14.6 & 97.0 & 83.7 & 13.3 & 96.6 & \underline{84.2} & \underline{12.4} & 97.0 & \cellcolor{blue!20}{\textbf{86.3}} & \cellcolor{green!20}{\textbf{10.7}} \\
     & CLIP (ViT-L-14) & 98.1 & 79.8 & 18.3 & 98.0 & 79.8 & 18.2 & 98.1 & \underline{83.8} & \underline{14.3} & 98.1 & \cellcolor{blue!20}{\textbf{83.9}} & \cellcolor{green!20}{\textbf{14.2}} \\
     & ALIGN & 95.8 & \underline{77.1} & \underline{18.7} & 95.8 & \cellcolor{blue!20}{\textbf{78.0}} & \cellcolor{green!20}{\textbf{17.8}} & 95.1 & 71.1 & 24.0 & 95.0 & 73.8 & 21.2 \\
     & AltCLIP & 98.5 & 82.6 & 15.9 & 98.4 & 83.0 & 15.4 & 98.6 & \underline{88.8} & \underline{9.8} & 98.7 & \cellcolor{blue!20}{\textbf{89.5}} & \cellcolor{green!20}{\textbf{9.2}} \\
     \midrule
     \multirow{3}{*}{VLCS}
     & CLIP (ViT-B-32)&  75.6 & 20.5 & 55.1 & 75.6 & 22.7 & 52.9 & 76.4 & \underline{29.5} & \underline{46.9} & 76.5 & \cellcolor{blue!20}{\textbf{33.0}} & \cellcolor{green!20}{\textbf{43.5}} \\
     & CLIP (ViT-L-14) & 72.6 & 4.2 & 68.4 & 70.9 & 6.8 & \underline{64.1} & 73.4 & \underline{8.9} & 64.5 & 71.1 & \cellcolor{blue!20}{\textbf{12.6}} & \cellcolor{green!20}{\textbf{58.5}} \\
     & ALIGN & 78.8 & 33.0 & 45.8 & 78.2 & 30.7 & 47.5 & 78.0 & \cellcolor{blue!20}{\textbf{43.2}} & \cellcolor{green!20}{\textbf{34.8}}  & 77.6 & \underline{39.8} & \underline{37.8}\\
     & AltCLIP & 78.3 & \underline{24.7} & \cellcolor{green!20}{\textbf{53.6}} & 77.5 & 24.4  & \underline{53.1} & 79.0 & 20.5 & 58.5 & 78.9 & \cellcolor{blue!20}{\textbf{25.0}} & 53.9  \\
     \midrule
     CXR14
     & BiomedCLIP & 55.3 & 28.9 & 26.4 & 55.7 & \cellcolor{blue!20}{\textbf{41.8}} & \cellcolor{green!20}{\textbf{13.9}} & 54.8 & 21.8 & 33.0  & 56.2 & \underline{41.6} & \underline{14.6}
     \\
    \bottomrule
   \end{tabular}
 \end{table}
We note that when doing both projections does not improve the baseline, using only $u^k$ or $v^j$ still outperforms the baseline. For instance, the ALIGN model in the Waterbirds dataset achieves the best performance with only $u^k$ projection. This suggests that in certain cases, harmful and helpful concepts are intertwined in the embedding space, and using just one projection can be beneficial. We leave further investigation to future work.

\section{Additional experiments}\label{appendix:addition_exps}
\subsection{Combination with the calibration methods}
\label{sec:calibration_baseline}
\begin{table}[h]
\small
\caption{Additional baseline: text-classification calibration method \cite{holtzman2021surface}}
   \label{tab:calibration_baseline}
   \centering
   \begin{tabular}{lllccccccccc}
     \toprule
    \multirow{2}{*}{Dataset} & \multirow{2}{*}{Model} & \multicolumn{3}{c}{Calibration} & \multicolumn{3}{c}{$\SYSNAME$} & \multicolumn{3}{c}{Calibration + $\SYSNAME$} \\ 
    \cmidrule(lr){3-5} \cmidrule(lr){6-8} \cmidrule(lr){9-11}
    & & AVG & WG($\uparrow$) & Gap($\downarrow$) & AVG & WG($\uparrow$) & Gap($\downarrow$) & AVG & WG($\uparrow$) & Gap($\downarrow$) \\
    \toprule
    \multirow{2}{*}{CivilComments} & BERT & 51.0 & 37.3 & 13.7 & 49.7 & \cellcolor{blue!20}{\textbf{42.3}} & \cellcolor{green!20}{\textbf{7.4}} & 53.4 & 36.9 & 16.5 \\
      & Ada & 73.3 & 31.2 & 42.1 & 56.6 & \cellcolor{blue!20}{\textbf{44.9}} & \cellcolor{green!20}{\textbf{11.7}} & 68.3 & 35.0 & 33.3 \\
    
     \midrule
    \multirow{2}{*}{HateXplain}  & BERT & 60.9 & 15.8 & 45.1 & 57.3 & 14.0 & 43.3 & 56.7 & \cellcolor{blue!20}{\textbf{22.8}} & \cellcolor{green!20}{\textbf{33.9}} \\
      & Ada & 61.9 & 31.6 & 30.3 & 63.6 & 21.1 & 42.5 & 59.6 & \cellcolor{blue!20}{\textbf{33.3}} & \cellcolor{green!20}{\textbf{26.3}} \\
    \midrule
    
    \multirow{2}{*}{Amazon} & BERT & 78.0 & 57.7 & 20.3 & 81.0 & \cellcolor{blue!20}{\textbf{64.4}} & \cellcolor{green!20}{\textbf{16.6}} & 79.0 & 59.2 & 19.8 \\
      & Ada & 71.2 & 50.5 & 20.7 & 82.9 & 63.8 & \cellcolor{green!20}{\textbf{19.1}} & 83.2 & \cellcolor{blue!20}{\textbf{63.9}} & 19.3  \\
    \midrule
    
    \multirow{2}{*}{Gender Bias}  & BERT & 85.4 & 83.2 & 2.2 & 85.1 & \cellcolor{blue!20}{\textbf{84.9}} & \cellcolor{green!20}{\textbf{0.2}} & 85.7 & 82.5  & 3.2 \\
      & Ada & 84.2 & 77.8 & 6.4 & 78.0 & 60.1 & 17.9  & 84.2 & \cellcolor{blue!20}{\textbf{77.9}} & \cellcolor{green!20}{\textbf{6.3}} \\

    \bottomrule
   \end{tabular}
 \end{table}
Table \ref{tab:calibration_baseline} shows that $\SYSNAME$ further benefits from the calibration methods. This further highlights the versatility of $\SYSNAME$---we can combine it with such methods with no additional work. To showcase this, we show additional results from (1) applying the calibration method alone, (2) our method, (3) the combination.

This result show that the best performing method across the board is either $\SYSNAME$ or the combination. The underlying reason for this is that as the two methods are orthogonal, adding calibration can further improve the results.

\subsection{Synthetic experiments}\label{sec:theory_exp}
\begin{figure}
    \subfloat[\centering]{{\includegraphics[width=.44\textwidth]{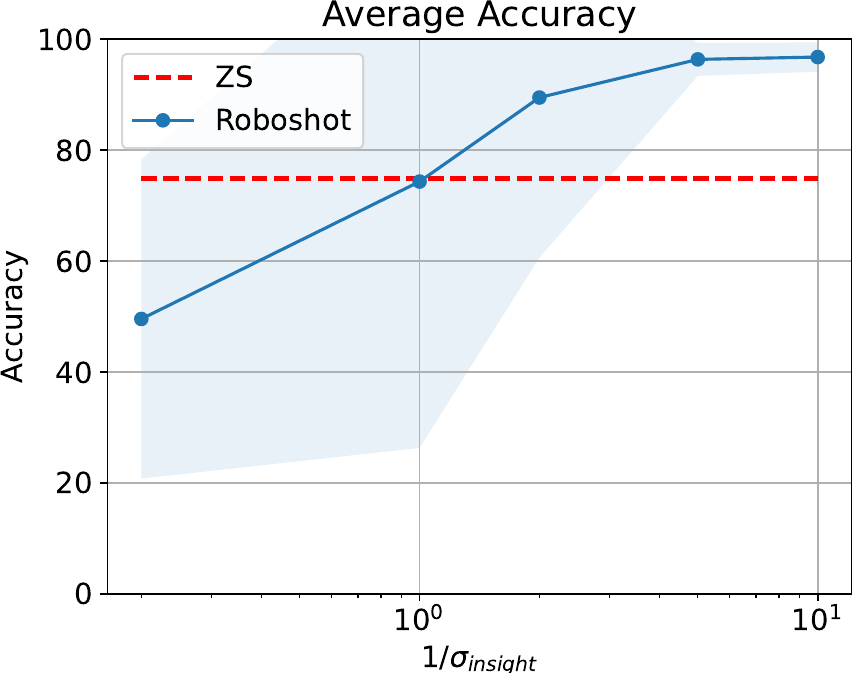} }}%
    \subfloat[\centering]{{\includegraphics[width=.44\textwidth]{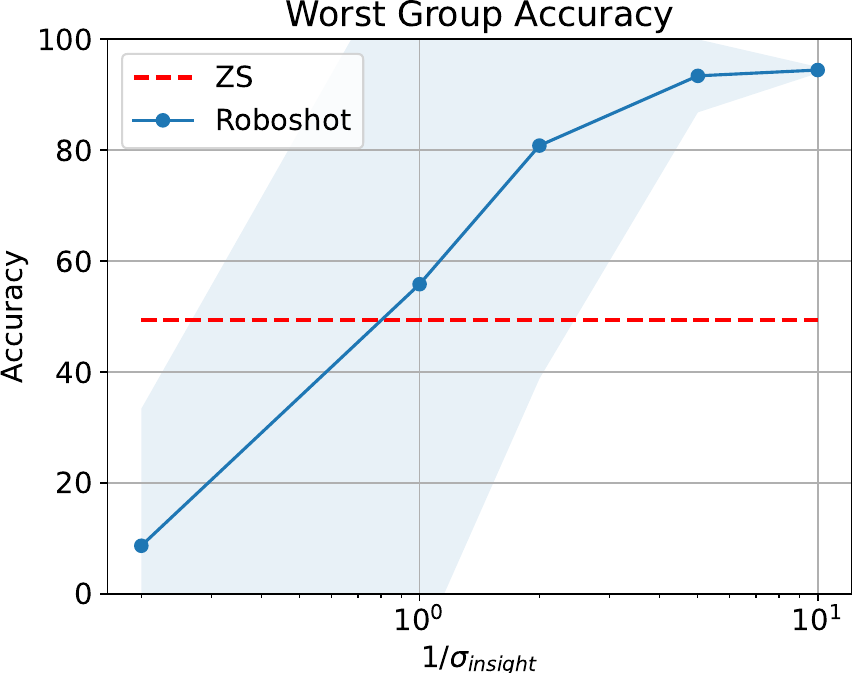} }}%
    \caption{Synthetic experiment with varying $\sigma_{noise}$. As expected, the performance improves at a rate inversely proportional to $\sigma_{noise}$.}
    \label{fig:synthetic_exp}
\end{figure}
\textbf{Setup.} 
We validate our theoretical claims by performing a synthetic experiment where we vary the noise level in the insight vectors ($\sigma_{insight}$). Higher $\sigma_{insight}$ indicates more noise. We use the following basis vectors as concept vectors $z_{helpful}=(1, 0, 0), z_{spurious}=(0,1,0), z_{benign}=(0, 0, 1)$, and class embedding vectors $c_{1}=z_{helpful}+z_{spurious}+ z_{benign}$ and $c_{0}=-z_{helpful}-z_{spurious}+ z_{benign}$. Experiments are repeated 100 times.

\begin{itemize}
\item Synthetic data input distribution ($s$ denotes spurious feature group)
\begin{itemize}
    \item $x|y=1, s=0 \sim \mathcal{N}([w_{helpful}, w_{spurious}, w_{benign}], \sigma_{input}I), n=2500$
    \item $x|y=1, s=1 \sim \mathcal{N}([w_{helpful}, -w_{spurious}, w_{benign}], \sigma_{input}I), n=2500$
    \item  $x|y=0, s=0 \sim \mathcal{N}([-w_{helpful}, -w_{spurious}, w_{benign}], \sigma_{input}I), n=2500$
    \item $x|y=0, s=1 \sim \mathcal{N}([-w_{helpful}, w_{spurious}, w_{benign}], \sigma_{input}I), n=2500$
\end{itemize}

\item Insight vectors
\begin{itemize}
    \item  $v_{helpful} = \gamma_{helpful}z_{helpful} + \gamma_{s}z_{spurious} + \gamma_{b}z_{benign}$, where $\gamma_{s} \sim \mathcal{N}(0, \sigma_{inisght})$, $\gamma_{b} \sim \mathcal{N}(0, \sigma_{benign})$
    \item $v_{harmful} = \gamma_{c}z_{helpful} + \gamma_{harmful}z_{spurious} + \gamma_{b}z_{benign}$, where $\gamma_{c} \sim \mathcal{N}(0, \sigma_{inisght})$, $\gamma_{b} \sim \mathcal{N}(0, \sigma_{benign})$
\end{itemize}
\end{itemize}

For the experiment reported in Figure \ref{fig:synthetic_exp}, we used $w_{helpful}=1, w_{spurious}=1,  w_{benign}=0.5, \gamma_{helpful}=1, \gamma_{harmful}=1$, $\sigma_{input}=0.5, \sigma_{benign}=0.01$

\textbf{Results.}
In Figure \ref{fig:synthetic_exp}, we observe that up to 10 - 20\% of noise level to signal (harmful, helpful coefficients = 1), our algorithm works well, recovering worst group accuracy and improving average group accuracy. This result supports our claims in Theorems \ref{thm:coefficient_harmful_bound} and \ref{thm:coefficient_helpful_bound}.

\subsection{Embedding analysis}
\label{sec:failure_explanation}
\begin{figure}
    \subfloat[\centering]{
    {\includegraphics[width=.46\textwidth]{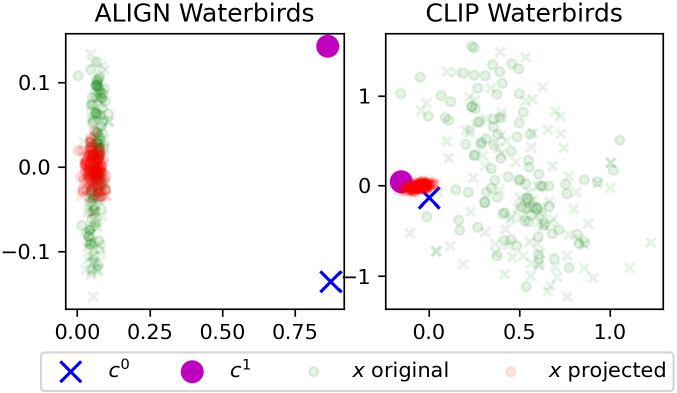} }
     \label{fig:samples}
    }%
    \subfloat[\centering]
     {
     {\includegraphics[width=.51\textwidth]{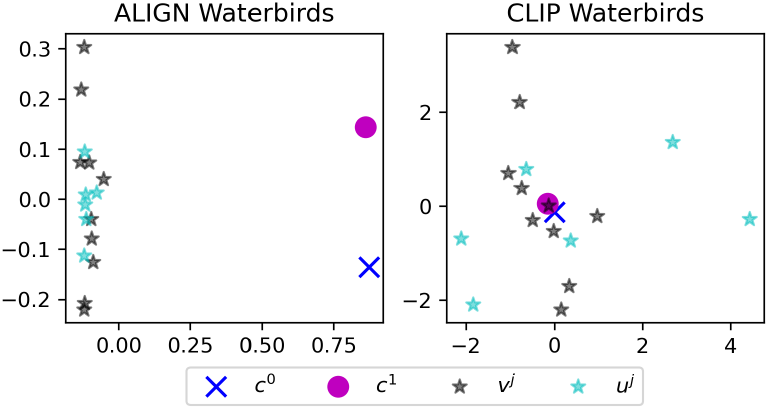} }
     \label{fig:keywords}
     }%
    \caption{(a) Original (green) and projected (red) input embeddings $x$, and label embeddings $c^0$ and $c^1$. (b) label embeddings $c^0$ and $c^1$, harmful insight embeddings $v^k$ (black star) and helpful insight embeddings $u^j$ (blue star)}
    \label{fig:align_clip_compare}
\end{figure}
We provide insights into the case where our method does not improve the baseline (ALIGN model on Waterbirds) in Fig. \ref{fig:align_clip_compare}. In Fig. \ref{fig:samples}, we visualize the original and projected input embeddings ($x$ in green and red points, respectively), and the label embeddings ($c^0$ and $c^1$). Fig. \ref{fig:samples} (left) shows the embeddings from the ALIGN model. We observe that the projected embeddings (red) still lie within the original embedding space, even with reduced variance. In contrast, when examining the CLIP model embeddings (Figure \ref{fig:samples} (right)), we observe that the projected embeddings are significantly distant from the original ones. Unsurprisingly, Figure \ref{fig:keywords} (left) reveals that $v^j$ and $u^k$ (harmful and helpful insight embeddings in black and blue stars, respectively) are not distinguishable in the text embedding space of ALIGN, collapsing the input embeddings after $\SYSNAME$ is applied.

\subsection{Analysis on the robustness to spurious correlations.}
\begin{figure}
    \subfloat[\centering][ViT-B-32]{{\includegraphics[width=0.95\textwidth]{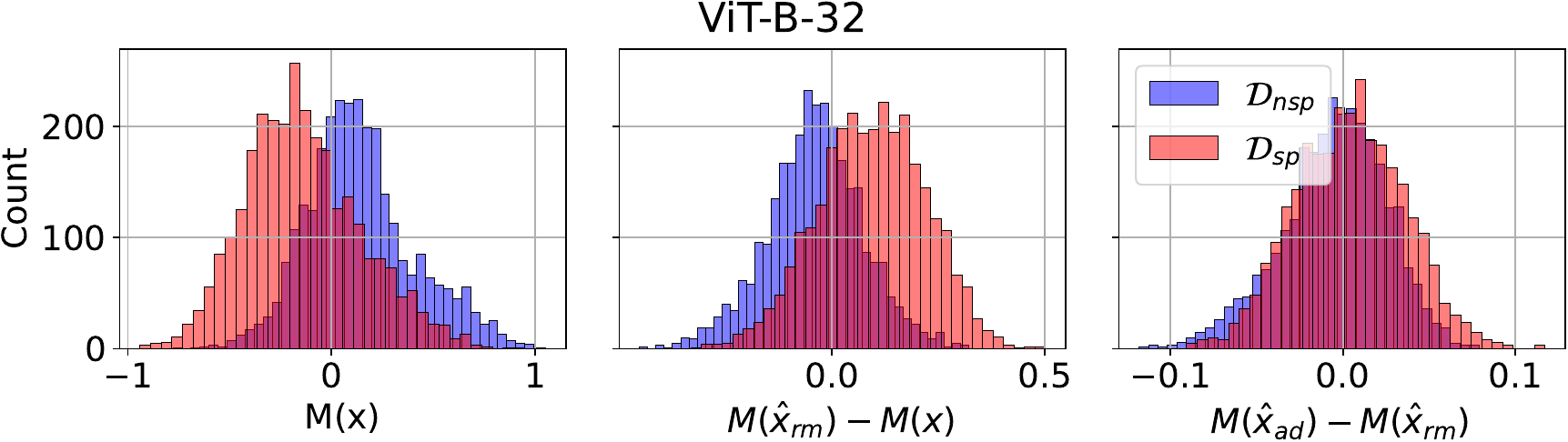} }}\\
    \subfloat[\centering][ViT-L-14]{{\includegraphics[width=0.95\textwidth]{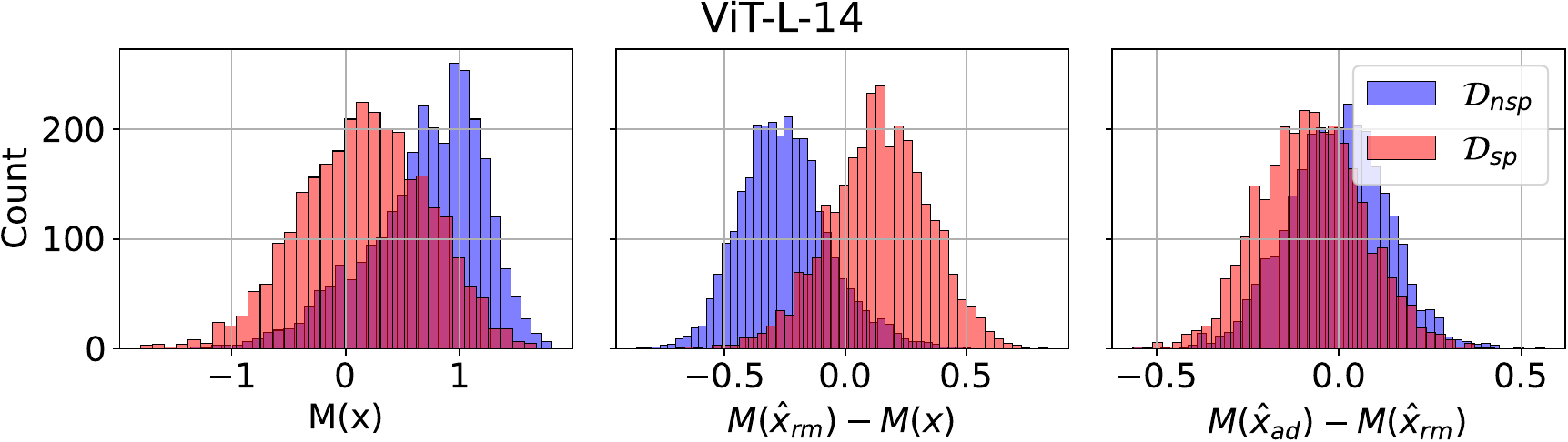} }}\\
    \subfloat[\centering][ALIGN]{{\includegraphics[width=0.95\textwidth]{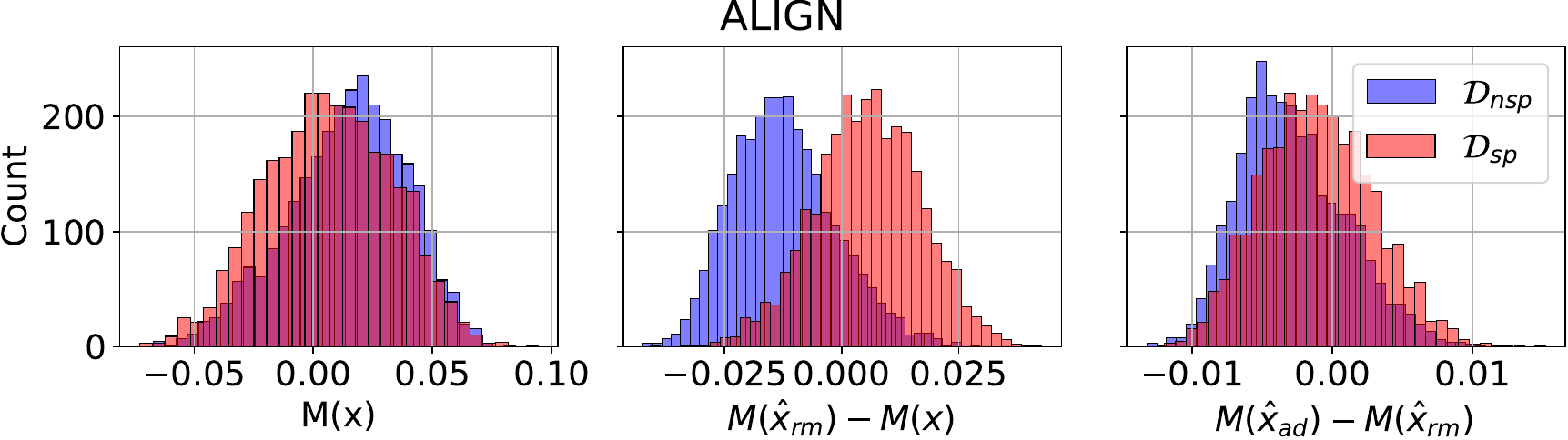} }}\\
    \subfloat[\centering][AltCLIP]{{\includegraphics[width=0.95\textwidth]{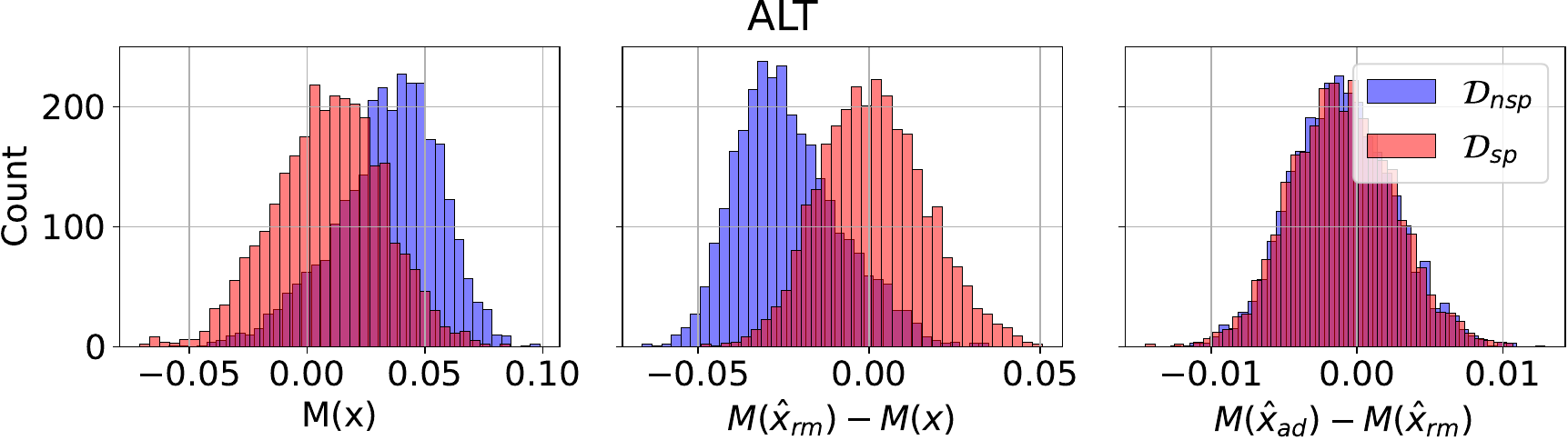} }}\\
    \caption{Margin analysis in Waterbirds dataset. Typically, inputs with spurious features $\mathcal{D}_{sp}$ tend to be closer to the decision boundary, inducing more errors. As expected, we can observe that harmful insight removal procedure increases the margin of $\mathcal{D}_{sp}$, but decreases the margin of inputs with non-spurious features $\mathcal{D}_{nsp}$. This can explain the potential tradeoff between the accuracy of $\mathcal{D}_{sp}$ and $\mathcal{D}_{nsp}$. If the gain in $\mathcal{D}_{sp}$ outweights the loss in $\mathcal{D}_{nsp}$, the average accuracy increases as in most cases. However, if the gain in $\mathcal{D}_{sp}$ is less the loss in $\mathcal{D}_{nsp}$, the average accuracy decreases as in ALIGN. In either case, the model performance in $\mathcal{D}_{sp}$ is improved by this procedure. In addition step, we expect that margin improves in both of  $D_{sp}$, $D_{nsp}$ on average as in ViT-B-32. However, in most cases, the margin changes are not that crucial, implying extracting helpful insights is not easy in Waterbirds dataset.}
    \label{fig:waterbirds_margin_analysis}
\end{figure}

\begin{figure}
    \subfloat[\centering][ViT-B-32]{{\includegraphics[width=0.95\textwidth]{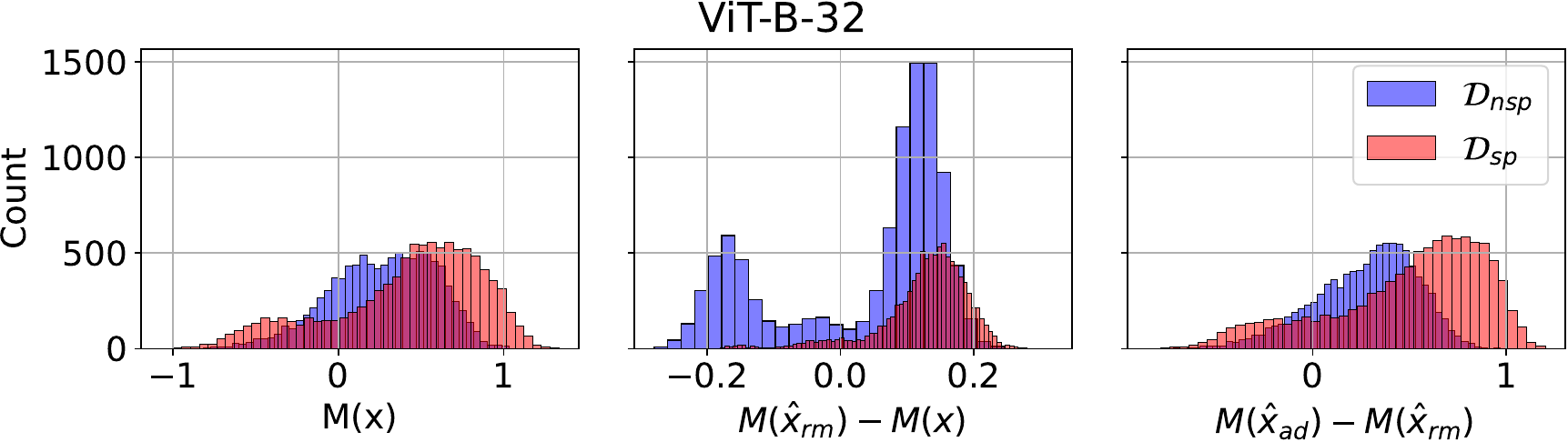} }}\\
    \subfloat[\centering][ViT-L-14]{{\includegraphics[width=0.95\textwidth]{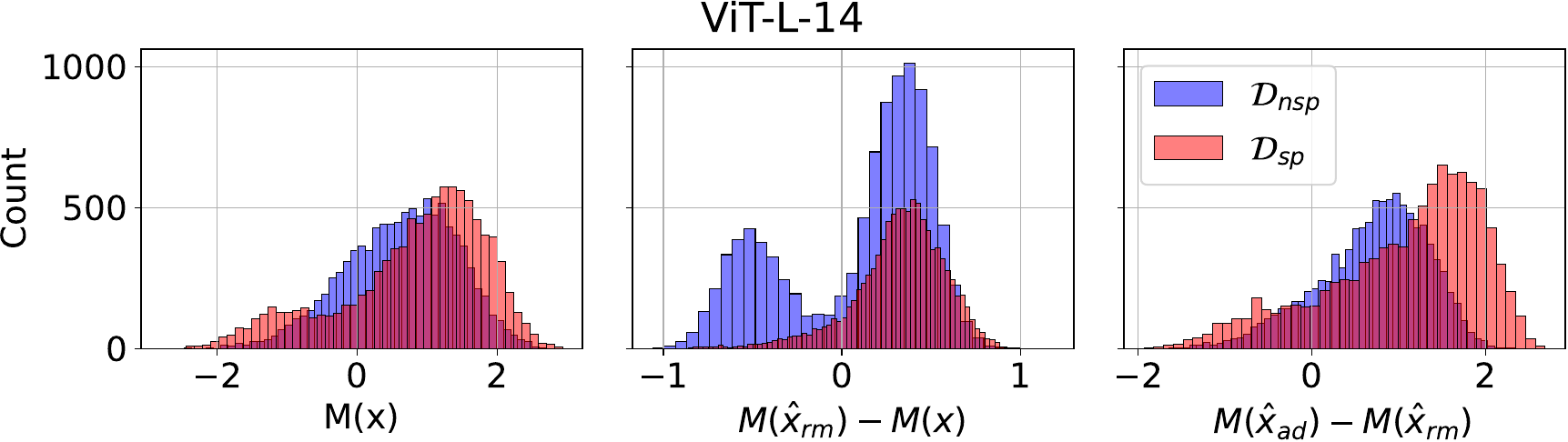} }}\\
    \subfloat[\centering][ALIGN]{{\includegraphics[width=0.95\textwidth]{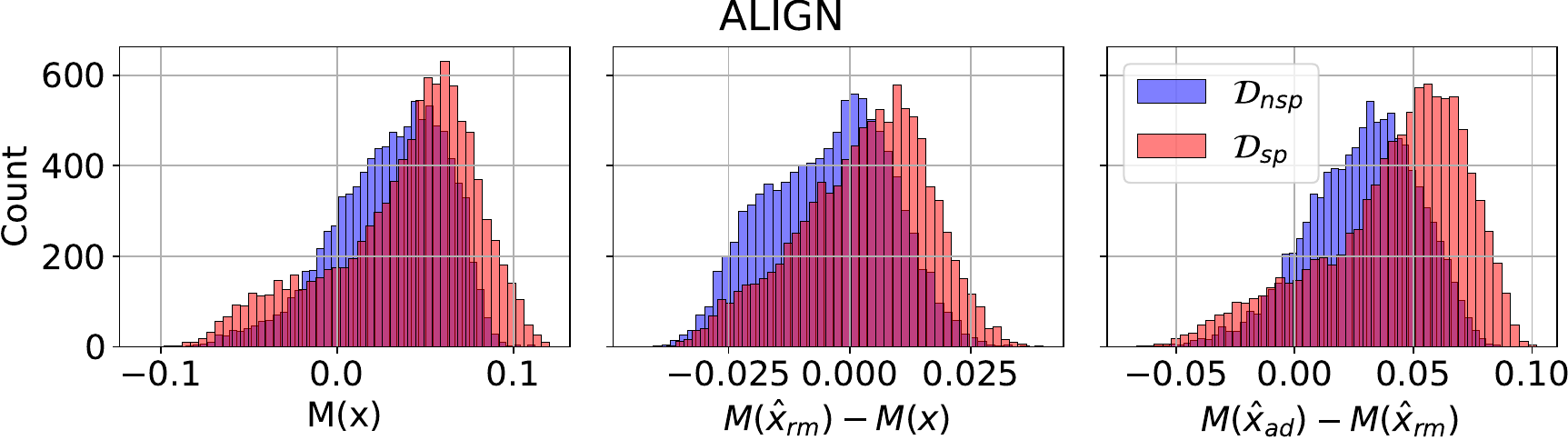} }}\\
    \subfloat[\centering][AltCLIP]{{\includegraphics[width=0.95\textwidth]{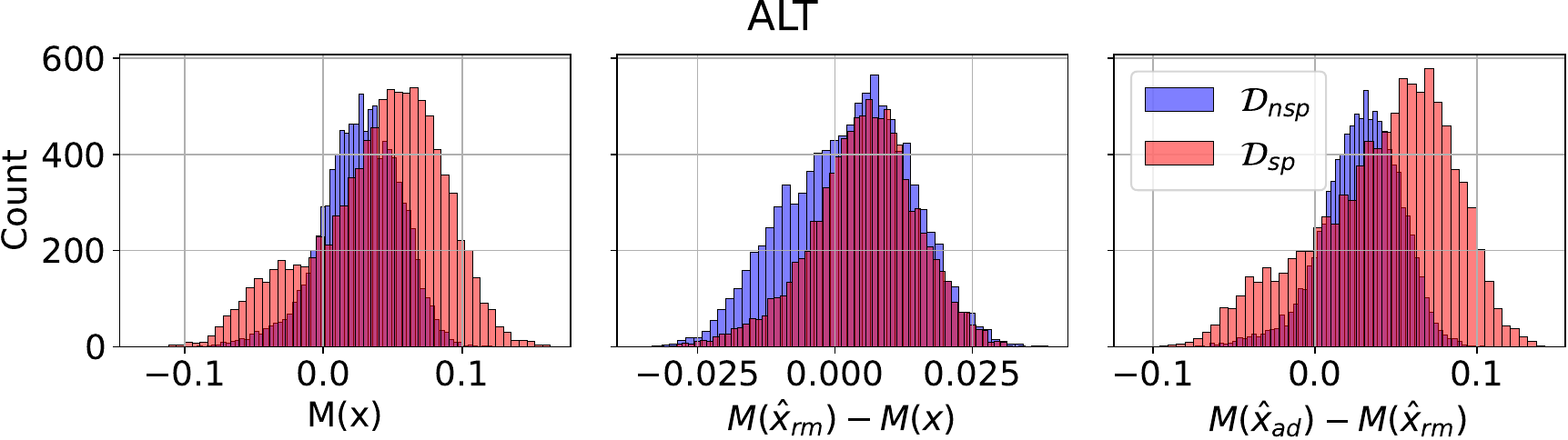} }}\\
    \caption{Margin analysis in CelebA dataset. Again, inputs with spurious features "blond" tend to induce errors ("men"-"blond", "girl"-"non-blond"). As expected, we can observe that harmful insight removal procedure increases the margin of $\mathcal{D}_{sp}$, but decreases the margin of inputs with non-spurious features $\mathcal{D}_{nsp}$, which may lead to the potential tradeoff. However, in CelebA dataset, the helpful insight addition step turns out to be helpful, increasing the margins of both distributions much. It can be interpreted as helpful insights can be captured easily in images.}
    \label{fig:celebA_margin_analysis}
\end{figure}
We provide in-depth result analysis to explain the performance changes in the average accuracy (AVG) and worst group accuracy (WG), especially with respect to spurious correlations. Concretely, consider the distribution of the margin $M: \mathcal{X} \rightarrow \real$ given by $M(x):= \ip{c^+}{x}-\ip{c^-}{x}$, where $c^+, c^-$ are the correct/incorrect class embeddings. Accuracy can be expressed as $\E{\mathbb{I}(M(x))}$. The margin distributions and the margin changes by roboshot are illustrated in Figure \ref{fig:waterbirds_margin_analysis} (Waterbirds), \ref{fig:celebA_margin_analysis} (CelebA). We denote data with spurious features as $\mathcal{D}_{sp}$ (i.e. waterbirds with land background, landbirds with water background), and data with non-spurious features as $\mathcal{D}_{nsp}$ (i.e.  waterbirds with water background, landbirds with land background). In the first column, $M(x)$ denotes the margin distribution of zeroshot prediction. In the second column, $M(\hat{x}_{rm})-M(x)$ represents the margin changes by the roboshot harmful concept removal procedure.  In the third column, $M(\hat{x}_{ad})-M(\hat{x}_{rm})$ represents the margin changes by the roboshot helpful concept addition. Typically, inputs with spurious features $\mathcal{D}_{sp}$ tend to be closer to the decision boundary, inducing more errors. As expected, we can observe that harmful insight removal procedure increases the margin of $\mathcal{D}_{sp}$, but decreases the margin of inputs with non-spurious features $\mathcal{D}_{nsp}$. This can explain the potential tradeoff between the accuracy of $\mathcal{D}_{sp}$ and $\mathcal{D}_{nsp}$. If the gain in $\mathcal{D}_{sp}$ outweights the loss in $\mathcal{D}_{nsp}$, the average accuracy increases as in most cases. However, if the gain in $\mathcal{D}_{sp}$ is less the loss in $\mathcal{D}_{nsp}$, the average accuracy decreases as in ALIGN. In either case, the model performance in $\mathcal{D}_{sp}$ is improved by this procedure. In addition step, we expect that margins improve in both of  $D_{sp}$, $D_{nsp}$ on average. Helpful insight addition procedure turns out be quite effective in CelebA dataset, where visual features can be described more easily by language models.

\subsection{Isolating concepts by averaging relevant concepts}
\label{sec:concepts}
\begin{table}[!htb]
 \caption{Left: Cosine similarity between concept images and original embedding vs. averaged embedding. Right: $\SYSNAME$ on Waterbirds with original vs. averaged embedding}
    \begin{minipage}{.3\linewidth}
      \small
       \centering
       \begin{tabular}{lccc}
         \toprule
        Concept & Original & Average \\
        \toprule
        Green & 0.237 & \textbf{0.241} \\
         \midrule
        Red & 0.236 & \textbf{0.240} \\
        \midrule
        Blue & 0.213 & \textbf{0.229} \\
        \midrule
        Yellow & 0.237 & \textbf{0.246} \\
        \midrule
        Square & 0.214 & \textbf{0.220} \\
        \bottomrule
   \end{tabular}
    \end{minipage}%
    \begin{minipage}{.7\linewidth}
     \small
       \centering
       \begin{tabular}{lccccccccc}
         \toprule
         \multicolumn{3}{c}{ZS} & \multicolumn{3}{c}{$\SYSNAME$ Original} & \multicolumn{3}{c}{$\SYSNAME$ Average} & \\
        \cmidrule(lr){1-3} \cmidrule(lr){4-6} \cmidrule(lr){7-9}
         AVG & WG & Gap & AVG & WG & Gap & AVG & WG & Gap  \\
        \toprule
        86.6 & 29.6 & 57.0 & 87.1 & 31.5 & 55.6 & 78.8 & \textbf{55.1} & \textbf{23.7} \\
        \bottomrule
       \end{tabular}
        \end{minipage} 
\label{tab:modeling_exp}
\end{table}
We conduct experiments to test the viability of our concept modeling. Specifically, we want to find out if CLIP input representation $x$ contains harmful, helpful, and benign components ($z_s$, $z_r$, and $z_b$ respectively in equation \ref{eq:x_modeling}) and whether it is reasonable to assume benign components as noise.
\paragraph{Can we partition CLIP input representation into harmful, helpful, and benign concepts?}
For a particular concept (e.g., ``land''), we hypothesize that the true concept component is mixed with other concept components due to the signal in training data. For instance, land often co-occurs with sky, cattle, and other objects. Thus, the CLIP representation of ``land'' is entangled with these other concepts. To potentially isolate the helpful concept, we ask LM for an exhaustive list of concepts related to ``land'' and average the embedding of all related concepts. The intuition here is that a clean ``land'' component exists in each individual embedding, and the remaining is likely to be random, which can be averaged out and leave us with the true concept.

To verify this intuition, we compare the original and averaged embeddings of concepts listed in Table \ref{tab:modeling_exp} (left). For each concept, we get 100 Google image search results and filter out noisy images (e.g., images with large text and artifacts) by eyeballing. We then report the average cosine similarity between the images and original embedding vs. the embedding from our averaging procedure. Averaged embedding has higher cosine similarity across the board than original CLIP embedding. To some extent, this indicates that the averaging procedure isolates the true concept. And thus, \textit{benign components in embeddings can be canceled out}.

\paragraph{Does $\SYSNAME$ gain improvement with isolated concept?}
Table \ref{tab:modeling_exp} (right) compares $\SYSNAME$ with removing harmful insights using original CLIP embedding vs. averaged embedding. We use Waterbirds dataset because the harmful insights are known in prior. To isolate the effect of our averaging procedure, we use ``landbird'' and ``waterbird'' as labels without additional prompts (e.g., ``a picture of [label]''), and we only use ``land'' and ``water'' as the harmful insights to remove, which causes slight difference with the results reported in Table \ref{tab:main_performance}. Confirming our intuition, \textit{using the averaged embedding results in better WG performance and smaller Gap}.

\subsection{Roboshot without decomposition}
To see the effectiveness of QR decomposition of insight vectors, we conduct additional ablation experiment of decomposition method. In Table \ref{tab:decomposition_ablation}, w/o QR ($v^j$ only), w/o QR ($u^k$ only), and w/o QR (both) represents roboshot rejection only, addition only, both without QR decomposition step. Contrary to our expectation, in binary classification (Waterbirds, CelebA), Roboshot method works well without QR decomposition. This can be interpreted as insights from LLM provide almost orthogonal vectors. However, in multiclass classification, where rejection, addition vectors are generated by combinatorially paring insights for each class, Roboshot method get worse. Especially, addition step collapse. While rejection step wears off the subspace that the insight vectors span and there couldn't be more difference, addition steps can push multiple times to the similar directions. From this ablation experiment, the benefits of obtaining subspace via decomposition can be explained by two ways. First, in removal step, it provides a clean way to remove the subspace that spurious features span. Secondly, int addition step, it prevents overemphasis on some helpful insight directions.

\begin{table}[ht]
\small
\caption{Ablation of QR decomposition}
   \label{tab:decomposition_ablation}
   \centering
   \setlength\tabcolsep{1pt}
   \begin{tabular}{llccccccccccccc}
     \toprule
    \multirow{2}{*}{Dataset} & \multirow{2}{*}{Model} & \multicolumn{3}{c}{Roboshot w/ QR} & \multicolumn{3}{c}{w/o QR ($v^j$ only)} & \multicolumn{3}{c}{w/o QR ($u^k$ only)} & \multicolumn{3}{c}{w/o QR (both)} \\ 
    \cmidrule(lr){3-5} \cmidrule(lr){6-8} \cmidrule(lr){9-11} \cmidrule(lr){12-14}
    && AVG & WG($\uparrow$) & Gap($\downarrow$) & AVG & WG($\uparrow$) &Gap($\downarrow$) & AVG & WG($\uparrow$) &Gap($\downarrow$) & AVG & WG($\uparrow$) &Gap($\downarrow$)   \\
    \toprule
     \multirow{3}{*}{Waterbirds}
     & CLIP (ViT-B-32) & 83.0 & 54.4 & 28.6 & 79.5 & 58.3 & 21.2 & 83.0 & 31.2 & 51.8 & 79.6 & \cellcolor{blue!20}{\textbf{62.5}} & \cellcolor{green!20}{\textbf{17.1}} \\ 
      & CLIP (ViT-L-14)& 79.9 & 45.2 & 34.7 & 79.3 & 36.3 & 43.0 & 88.8 & 31.6 & 57.2 & 75.0 & \cellcolor{blue!20}{\textbf{45.8}} & \cellcolor{green!20}{\textbf{29.2}} \\
     & ALIGN & 50.9 & 41.0 & 9.9 & 53.3 & 36.6 & 16.7 & 62.0 & \cellcolor{blue!20}{\textbf{50.9}} & 11.1 & 38.2 & 36.5 & \cellcolor{green!20}{\textbf{1.7}} \\
     & AltCLIP & 78.5 & 54.8 & 23.7 & 70.8 & \cellcolor{blue!20}{\textbf{56.1}} & 14.7 & 89.0 & 35.0 & 54.0 & 64.3 & 52.8 & \cellcolor{green!20}{\textbf{11.5}} \\
     \midrule 
     \multirow{3}{*}{CelebA}
     & CLIP (ViT-B-32) & 84.8 & 80.5 & 4.3 & 85.3 & 81.6 & 3.7 & 80.5 & 73.2 & 7.3 & 86.5 & \cellcolor{blue!20}{\textbf{83.5}} & \cellcolor{green!20}{\textbf{3.0}} \\
     & CLIP (ViT-L-14) & 85.5 & \cellcolor{blue!20}{\textbf{82.6}} & \cellcolor{green!20}{\textbf{2.9}} & 86.1 & 81.7 & 4.4 & 79.7 & 72.5 & 7.2 & 85.8 & 80.0 & 5.8 \\
     & ALIGN & 86.3 & 83.4 & 2.9 & 84.4 & 78.9 & 5.5 & 83.9 & 81.5 & 2.4 & 86.8 & \cellcolor{blue!20}{\textbf{84.5}} & \cellcolor{green!20}{\textbf{2.3}} \\
     & AltCLIP & 86.0 & 77.2 & 8.8 & 86.5 & 75.6 & 9.9 & 80.4 & 75.6 & \cellcolor{green!20}{\textbf{4.8}} & 86.0 & \cellcolor{blue!20}{\textbf{77.8}} & 8.2 \\
      \midrule
     \multirow{3}{*}{PACS}
     & CLIP (ViT-B-32) & 97.0 & \cellcolor{blue!20}{\textbf{86.3}} & \cellcolor{green!20}{\textbf{10.7}} & 97.0 & 82.9 & 14.1 & 85.5 & 37.8 & 47.7 & 83.8 & 33.0 & 50.8  \\
     & CLIP (ViT-L-14) & 98.1 & \cellcolor{blue!20}{\textbf{83.9}} & \cellcolor{green!20}{\textbf{14.2}} & 98.0 & 79.8 & 18.2 & 84.9 & 13.4 & 71.5 & 85.8 & 11.8 & 74.0 \\
     & ALIGN & 95.0 & 73.8 & 21.2 & 95.7 & \cellcolor{blue!20}{\textbf{75.9}} & \cellcolor{green!20}{\textbf{19.8}} & 56.9 & 0.2 & 56.7 & 58.0 & 0.2 & 57.8 \\
     & AltCLIP & 98.7 & \cellcolor{blue!20}{\textbf{89.5}} & \cellcolor{green!20}{\textbf{9.2}} & 98.4 & 83.1 & 15.3 & 67.8 & 4.0 & 63.8 & 65.0 & 2.8 & 62.2 \\
     \midrule
     \multirow{3}{*}{VLCS}
     & CLIP (ViT-B-32)&  75.6 & \cellcolor{blue!20}{\textbf{33.0}} & 43.5 & 75.5 & 20.5 & 55.0 & 21.4 & 0.0 & \cellcolor{green!20}{\textbf{21.4}} & 30.7 & 0.0 & 30.7 \\
     & CLIP (ViT-L-14) & 71.1 & \cellcolor{blue!20}{\textbf{12.6}} & 58.5 & 71.1 & 6.9 & 64.2 & 22.3 & 0.0 & 22.3 & 22.1 & 1.3 & \cellcolor{green!20}{\textbf{20.8}}  \\
     & ALIGN & 77.6 & \cellcolor{blue!20}{\textbf{39.8}} & 37.8 & 78.1 & 33.0 & 45.1 & 36.2 & 0.0 & 36.2 & 32.7 & 0.1 & \cellcolor{green!20}{\textbf{32.6}} \\
     & AltCLIP & 78.9 & 25.0 & 53.9 & 77.5 & \cellcolor{blue!20}{\textbf{25.1}} & 52.4 & 31.4 & 0.0 & 31.4 & 30.6 & 2.0 & \cellcolor{green!20}{\textbf{28.6}} \\
     \midrule
     CXR14
     & BiomedCLIP &  56.2 & \cellcolor{blue!20}{\textbf{41.6}} & \cellcolor{green!20}{\textbf{14.6}} & 55.9 & 36.6 & 19.3 & 55.2 & 23.9 & 31.3 & 56.1 & 37.2 & 18.9 &
     \\
    \bottomrule
   \end{tabular}
 \end{table}

\subsection{Insights Analysis}
\begin{figure}%
    \centering
    \subfloat[\centering Waterbirds]{{\includegraphics[width=.5\linewidth]{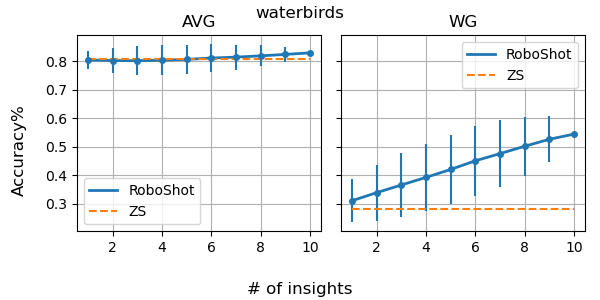} }}%
    \subfloat[\centering CelebA]{{\includegraphics[width=.5\linewidth]{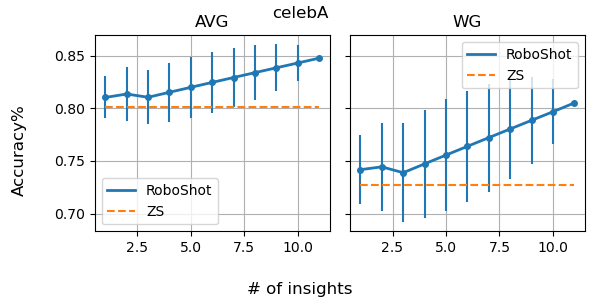} }}%
    \caption{Number of insights ablations}%
    \label{fig:n_insights_ablation}%
\end{figure}
In Figure \ref{fig:n_insights_ablation} we show ablations on the number of insights used in $\SYSNAME$. We can clearly observe that both average and worst-group accuracy improves (almost) linearly with the number of insights used.

In our theory, increasing the number of insights is beneficial until the span of insight vectors cover the subspace of helpful/harmful features --- thus for the optimality, we need insight vectors up to the rank of helpful/harmful subspaces. To validate it, we conduct a synthetic experiment with varying the number of insights. Here, the number of helpful/harmful/benign concepts is 12 respectively, with the embedding dimension 36 (12 + 12 +12). We add sequentially increase the number of insights based on a similar synthetic experiment setup in Appendix F.2 --- after all target concepts are covered by at least one insight, we resample insights for each target concept. \begin{figure}%
    \centering
    \subfloat[\centering AVG]{{\includegraphics[width=.5\linewidth]{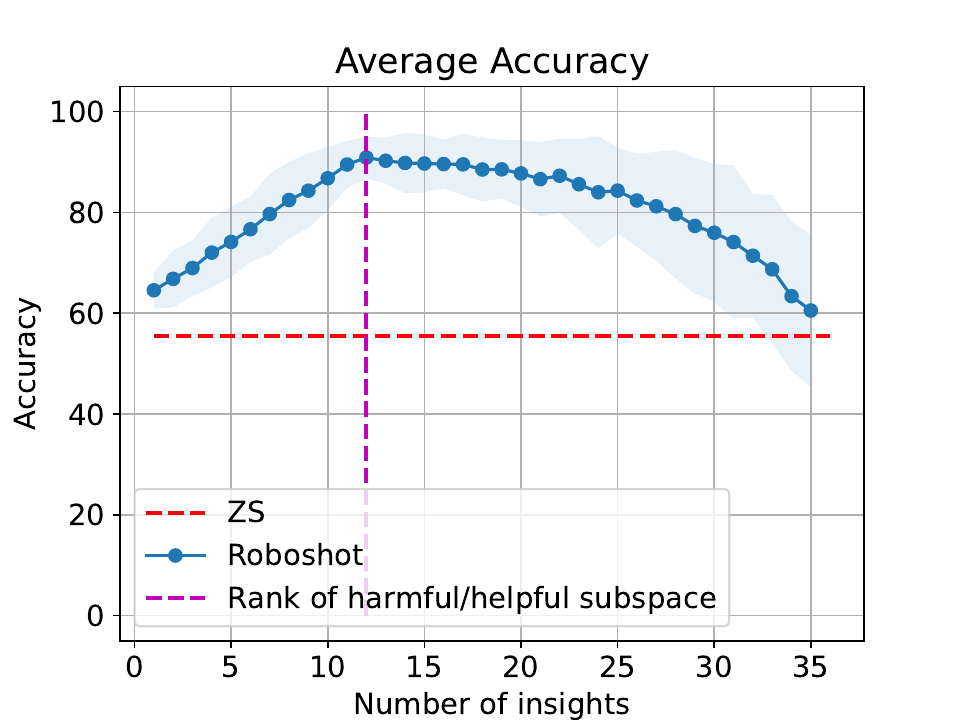} }}%
    \subfloat[\centering WG]{{\includegraphics[width=.5\linewidth]{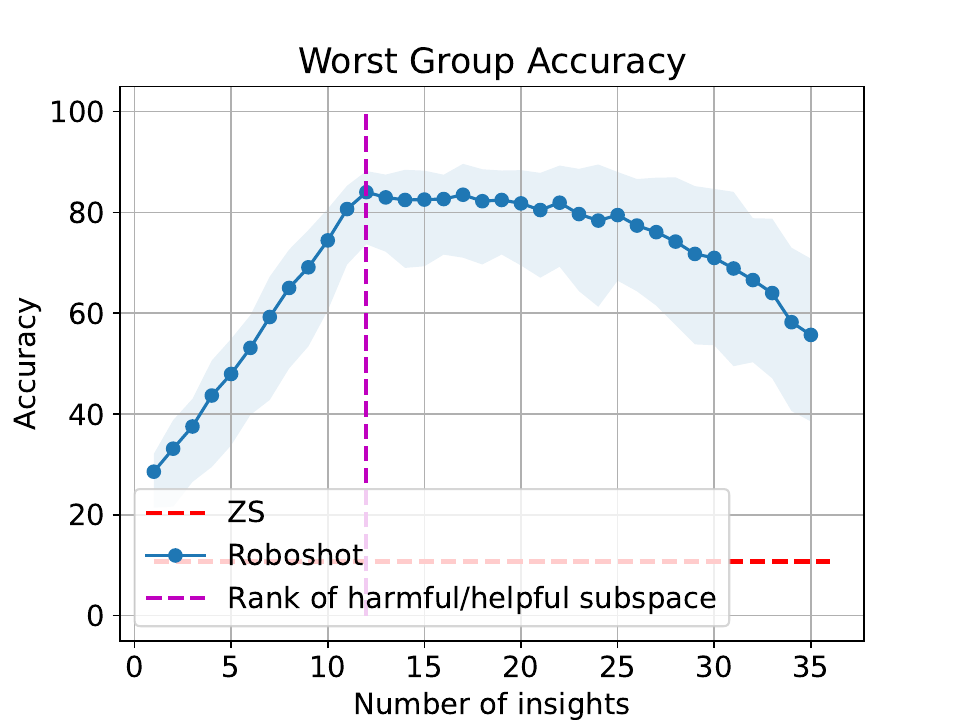}} }%
    \caption{Synthetic experiment on the number of insights}%
    \label{fig:n_insights_synthetic}%
\end{figure} In Figure \ref{fig:n_insights_synthetic}, we can observe that the AVG/WG performance improves until the number of insights is the same with the rank of harmful/helpful subspace. More detailed synthetic experiment setup is as follows.
\begin{itemize}
    \item $z_s=e_s$, where $S=12$ and $1\leq s \leq S$
    \item $z_r=e_r$, where $R=12$ and $S+1\leq r \leq S+R$
    \item $z_b=e_b$, where $B=12$ and $S+R+1\leq b \leq S+R+B$
    \item $v_{spurious}=\sum_{s=1}^{S}z_s$
    \item $v_{helpful}=\sum_{r=S+1}^{S+R}z_r$
    \item $v_{benign}=\sum_{b=S+R+1}^{S+R+B}z_b$

    \item Synthetic data input distribution ($s$ denotes spurious feature group)
    \begin{itemize}
        \item $x|y=1, s=0 \sim \mathcal{N}([w_{spurious}v_{spurious}, w_{helpful}v_{helpful}, w_{benign}v_{benign}], \sigma_{input}I), n=2500$
        \item $x|y=1, s=1 \sim \mathcal{N}([-w_{spurious}v_{spurious}, w_{helpful}v_{helpful}, w_{benign}v_{benign}], \sigma_{input}I), n=2500$
        \item  $x|y=0, s=0 \sim \mathcal{N}([-w_{spurious}v_{spurious}, -w_{helpful}v_{helpful}, w_{benign}v_{benign}], \sigma_{input}I), n=2500$
        \item $x|y=0, s=1 \sim \mathcal{N}([w_{spurious}v_{spurious}, -w_{helpful}v_{helpful}, w_{benign}v_{benign}], \sigma_{input}I), n=2500$
    \end{itemize}
    
    \item Insight vectors follow the same assumptions in theory --- the only target concept has the constant coefficient ($\gamma_{i,i}$ for $1 \leq i \leq S+R$) and other coefficients are sampled from $\mathcal{N}(0, \sigma_{insight})$ or $\mathcal{N}(0, \sigma_{benign})$.
    \item For the experiment reported in Figure \ref{fig:n_insights_synthetic}, we used $w_{helpful}=1, w_{spurious}=0.5,  w_{benign}=0.01, \gamma_{i,i}=1$ for $1 \leq i \leq S+R, \sigma_{input}=1, \sigma_{benign}=1$
    
\end{itemize}

\end{document}